\documentclass[a4paper,UKenglish,cleveref, autoref, thm-restate]{lipics-v2021}

\graphicspath{{./Figures/}}

\bibliographystyle{plainurl}

\title{Bi-objective Search with Bi-directional A*} 

\titlerunning{Bi-objective Search with Bi-directional A*} 

\author{Saman Ahmadi\footnote{Corresponding Author}}{Department of Data Science and AI, Monash University, Australia \and CSIRO Data61, Australia}{saman.ahmadi@monash.edu}{https://orcid.org/0000-0002-7326-3384}{}
\author{Guido Tack}{Department of Data Science and AI, Monash University, Australia}{guido.tack@monash.edu}{}{}
\author{Daniel Harabor}{Department of Data Science and AI, Monash University, Australia}{daniel.harabor@monash.edu}{}{}
\author{Philip Kilby}{CSIRO Data61, Australia}{philip.kilby@data61.csiro.au}{}{}

\authorrunning{S. Ahmadi, G. Tack, D. Harabor, and P. Kilby}

\Copyright{Saman Ahmadi, Guido Tack, Daniel Harabor, and Philip Kilby}

\ccsdesc{Computing methodologies~Search methodologies}
\ccsdesc{Theory of computation~Shortest paths}
\keywords{Bi-objective search, heuristic search, shortest path problems} 

\category{} 

\relatedversiondetails[]{Previous Version}{https://arxiv.org/abs/2105.11888} 


\funding{Research at Monash University is supported by the Australian Research Council (ARC) under grant numbers DP190100013 and DP200100025 as well as a gift from Amazon.}


\nolinenumbers 

\hideLIPIcs  

\EventEditors{Petra Mutzel, Rasmus Pagh, and Grzegorz Herman}
\EventNoEds{3}
\EventLongTitle{29th Annual European Symposium on Algorithms (ESA 2021)}
\EventShortTitle{ESA 2021}
\EventAcronym{ESA}
\EventYear{2021}
\EventDate{September 6--8, 2021}
\EventLocation{Lisbon, Portugal}
\EventLogo{}
\SeriesVolume{204}
\ArticleNo{16}
\usepackage{booktabs}
\usepackage{amsmath}  
\usepackage{amssymb}
\usepackage{amsthm}
\usepackage{pgfplots}
\pgfplotsset{compat=1.9}
\usepackage{floatrow}
\newfloatcommand{capbtabbox}{table}[]
\usetikzlibrary{decorations.pathmorphing}
\usepackage[noend,ruled,linesnumbered,vlined]{algorithm2e}
\newcounter{algoline}
\newcommand{\nlast}{\refstepcounter{algoline}\nlset{\rlap{*}}}
\SetKwInput{KwInput}{Inputs}
\SetKwInput{KwOutput}{Output}
\usepackage{xcolor}
\SetNlSty{bfseries}{\color{black}}{}
\begin{document}

\maketitle

\begin{abstract}
Bi-objective search is a well-known algorithmic problem, concerned with finding a set of optimal solutions in a two-dimensional domain.
This problem has a wide variety of applications such as planning in transport systems or optimal control in energy systems.
Recently, bi-objective A*-based search (BOA*) has shown state-of-the-art performance in large networks.
This paper develops a bi-directional and parallel variant of BOA*, enriched with several speed-up heuristics.
Our experimental results on 1,000 benchmark cases show that our bi-directional A* algorithm for bi-objective search (BOBA*) can optimally solve all of the benchmark cases within the time limit, outperforming the state of the art BOA*, bi-objective Dijkstra and bi-directional bi-objective Dijkstra by an average runtime improvement of a factor of five over all of the benchmark instances.
\end{abstract}
\section{Introduction}
Bi-objective search aims at finding a set of non-dominated, Pareto-optimal solutions in a domain with two objectives \cite{deb2014multi}. It has a wide range of real-world applications, such as planning routes for maritime transportation based on both the fuel consumption and the total risk of the vehicle route \cite{DBLP:journals/cor/VenetiMKPV17}, or energy efficient paths for electric vehicles with arrival time considerations \cite{shen2019energy}.
When the underlying system is a network, the problem is finding a set of paths between two points that are not dominated by other solution paths.

A comparison of traditional approaches to the bi-objective one-to-all shortest path problem, such as the label correcting algorithm in \cite{DBLP:journals/cor/SkriverA00}, the label setting approach in \cite{guerriero2001label}, and the adaptation of a near shortest path procedure in \cite{DBLP:journals/networks/CarlyleW05}, was presented in \cite{comparison_RaithE09}.
These label-based approaches have been extended in several recent papers.
A generalisation of Dijkstra's algorithm and its bi-directional counterpart (for the one-to-one variant) to the bi-objective problem was presented in \cite{DBLP:journals/eor/Sedeno-NodaC19} by utilising the pruning strategies of \cite{DBLP:journals/eor/DuqueLM15} to avoid expanding unpromising paths during the search. 
The results show that the state-of-the-art bi-objective Dijkstra algorithm can outperform the bounded label setting approach in \cite{raith2010speed} and the depth-first search-based \textit{Pulse} algorithm in \cite{DBLP:journals/eor/DuqueLM15} on large-size instances.

Another recent work on point-to-point bi-objective search is the Bi-Objective A* search scheme (BOA*) in \cite{ulloa2020simple}.
BOA* is a standard A* heuristic search that leverages the fast dominance checking procedure of \cite{DBLP:journals/cor/PulidoMP15} for multi-objective search. 
In contrast to eager dominance checking approaches, as in \cite{DBLP:journals/eor/Sedeno-NodaC19},
BOA* lazily postpones dominance checking for newly generated nodes until their expansion.
The experimental results in \cite{ulloa2020simple} on a set of large instances show that the efficient dominance checking helps BOA* to perform better than the bi-objective Dijkstra algorithm of \cite{DBLP:journals/eor/Sedeno-NodaC19} and other best-first search approaches such as the label-setting multi-objective search NAMOA* of \cite{DBLP:journals/eswa/MachucaM12} and its improved version with a dimensionality reduction technique called NAMOA*\textsubscript{dr} \cite{DBLP:journals/cor/PulidoMP15}.

In this paper, we present Bi-Objective Bi-directional A* (BOBA*), a bi-directional extension of the BOA* algorithm that is easy to parallelise, uses different objective orders and includes several new heuristics to speed up the search. Our experiments on a set of 1,000 large test cases from the literature show that BOBA* can solve all of the cases to optimality, outperforming the state-of-the-art algorithms in both runtime and memory requirement.
\section{Background and Notation}
For a directed bi-objective graph $G=(S,E)$ with a finite set of states $S$ and a set of edges $E \subseteq S \times S$, the point-to-point bi-objective search problem is to find the set of Pareto-optimal solution paths from $\mathit{start} \in S$ to $\mathit{goal} \in S$ that are not dominated by any solution for both objectives.
Every edge $e \in E$ has two non-negative attributes accessed via the cost function ${\bf cost}:E \rightarrow \mathbb{R}^+ \times \mathbb{R}^+$.
A path is a sequence of states $s_i \in S$ with $ i \in \{1, \dots, n \}$.
The cost of path $p=\{s_1,s_2,s_3,\dots,s_n\}$ is then defined as the sum of corresponding attributes on all the edges constituting the path as ${\bf cost}(p) = \sum_{i=1}^{n-1}{{\bf cost}(s_i,s_{i+1})}$.
Following the standard notation in the heuristic search literature, we define our search objects to be \textit{nodes}. A node $\mathit{x}$ is a tuple that contains a state $s(x) \in S$; a value ${\bf g}(x)$ which measures the cost of a concrete path from the $\mathit{start}$ state to state $s(x)$; a value ${\bf f}(x)$ which is an estimate of the cost of a complete path from $\mathit{start}$ to $\mathit{goal}$ via $s(x)$; and a reference $\mathit{parent}(x)$ which indicates the parent of node $x$.
We perform a systematic search by \textit{expanding} nodes in best-first order. 
Each expansion operation \textit{generates} a set of successor nodes, each denoted $\mathit{Succ}(s(x))$, which are added into an \textit{Open} list. The \textit{Open} list sorts the nodes according to their {\bf f}-values in an ascending order, for the purpose of further expansion.

As with other A*-based algorithms, we compute {\bf f}-values using a consistent and admissible heuristic function ${\bf h}: S \rightarrow \mathbb{R}^+ \times \mathbb{R}^+$ \cite{hart1968formal}. In other words, ${\bf f}(x)={\bf g}(x)+{\bf h}(s)$ where ${\bf h}(s)$ is a lower bound on the cost of paths from state $s$ to $\mathit{goal}$. 
Moreover, in bi-objective search, the cost function has two components which means that every (boldface) cost function is a tuple, eg. ${\bf f}=(f_1, f_2)$ or ${\bf h}=(h_1, h_2)$ and all operations are considered element-wise.
\begin{definition}
A heuristic function {\bf h} is consistent if we have ${\bf h}(s) \le {\bf cost}(s,t) + {\bf h}(t)$ for every edge $(s,t) \in E$. 
It is also admissible iff ${\bf h}(s) \le {\bf cost}(p)$ for every $s \in S$ and the optimal path \textit{p} from state $\mathit{s}$ to the $\mathit{goal}$ state. 
\end{definition}
\begin{definition}
For every pair of nodes ($x,y$) associated with the same state $s(x)=s(y)$, node $\mathit{y}$ is dominated by $\mathit{x}$ if we have $g_1(x) < g_1(y)$ and $g_2(x) \leq g_2(y)$ or if we have $g_1(x) = g_1(y)$ and $g_2(x) < g_2(y)$.
Node $\mathit{x}$ weakly dominates $\mathit{y}$ if $g_1(x) \leq g_1(y)$ and $g_2(x) \leq g_2(y)$.
\end{definition}
\textbf{Bi-objective A*:}
The Bi-Objective A* (BOA*) algorithm \cite{ulloa2020simple} first obtains its heuristic function \textbf{h} using two basic one-to-all searches on the reversed graph.
BOA* can then establish lower bounds on the cost of complete paths or \textbf{f}-values using the admissible heuristic \textbf{h}.
Although either of the two objectives can potentially play the key role in the bi-objective setting, standard BOA* usually chooses the first objective in the ($f_1,f_2$) order.
The search then expands all the promising nodes based on their cost estimates so as to ensure the node with the lexicographically smallest \textbf{f}-value is explored first. The algorithm terminates when there is no node in \textit{Open} while keeping all the non-dominated nodes associated with the $\mathit{goal}$ state in the solution set \textit{Sol}.
The main steps of the standard BOA* algorithm can be found in Algorithm~\ref{alg:boa_enh}, scripted with normal line numbers (without asterisk *) in black.
\begin{theorem}
BOA* computes a set of cost-unique Pareto-optimal solution paths \cite{ulloa2020simple}.
\end{theorem}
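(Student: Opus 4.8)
The plan is to establish three properties of the set $\mathit{Sol}$ returned by BOA*: \emph{soundness} (every path recorded in $\mathit{Sol}$ has a Pareto-optimal cost), \emph{completeness} (every Pareto-optimal cost of a $\mathit{start}$--$\mathit{goal}$ path is realised by some path in $\mathit{Sol}$), and \emph{cost-uniqueness} (distinct paths in $\mathit{Sol}$ have distinct cost vectors). Two routine invariants come first. By induction on generation order, every node $x$ is \emph{valid}: $\mathbf{g}(x)$ equals the cost of the concrete $\mathit{start}$-to-$s(x)$ path traced by $\mathit{parent}$ pointers. And since $\mathbf{h}$ is consistent, $\mathbf{f}(y)\ge\mathbf{f}(x)$ componentwise whenever $y\in\mathit{Succ}(s(x))$ is generated from $x$; combined with \textit{Open} always returning a lexicographically minimal $\mathbf{f}$-node, a standard A* argument shows the sequence of extracted nodes is lexicographically non-decreasing in $\mathbf{f}$. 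Restricting to a fixed state $s$, where $h_1(s)$ is constant, the nodes that pass the lazy dominance test at $s$ and are expanded therefore have strictly increasing $g_1$ and, by the test against the stored minimum second component at $s$, strictly decreasing $g_2$; in particular the stored $\min$-value for each state, and for $\mathit{goal}$, only decreases over time.

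Given these, cost-uniqueness is immediate: after a node $x$ with $s(x)=\mathit{goal}$ enters $\mathit{Sol}$, the stored value for $\mathit{goal}$ is at most $g_2(x)$, so any later $\mathit{goal}$ node of the same cost fails its dominance test and is discarded. For soundness, assume $x\in\mathit{Sol}$ has $\mathbf{g}(x)$ dominated by some $\mathit{start}$--$\mathit{goal}$ path cost; then it is dominated by the cost $\mathbf{c}$ of a Pareto-optimal one. I would invoke the reachability claim below to get that, before $x$ is extracted, BOA* has already extracted a $\mathit{goal}$ node whose cost weakly dominates $\mathbf{c}$; using $\mathbf{h}(\mathit{goal})=\mathbf{0}$, this forces the stored value for $\mathit{goal}$ down to at most $c_2\le g_2(x)=f_2(x)$ by the time $x$ is popped, so $x$ would have failed its own test --- contradiction. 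Completeness is the mirror image: for a Pareto-optimal cost $\mathbf{c}$ with witness path $q$, the reachability claim yields an extracted $\mathit{goal}$ node whose cost weakly dominates $\mathbf{c}$, and Pareto-optimality forces that cost to equal $\mathbf{c}$; the monotonicity invariant at $\mathit{goal}$ (an earlier-extracted $\mathit{goal}$ node either already has cost $\mathbf{c}$, or, having strictly smaller $g_1$, must by Pareto-optimality of $\mathbf{c}$ have strictly larger second component than $c_2$) then shows the first such node passes its test, so a path of cost $\mathbf{c}$ enters $\mathit{Sol}$.

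The crux --- and the step I expect to be the main obstacle --- is the reachability claim: for every $\mathit{start}$--$\mathit{goal}$ path $q=\langle s_1,\dots,s_n\rangle$ whose cost $\mathbf{c}$ is Pareto-optimal, BOA* extracts a $\mathit{goal}$ node whose cost weakly dominates $\mathbf{c}$. The obstacle is the lazy, single-scalar pruning: a node of $q$ may be discarded at an intermediate $s_i$ because an earlier expanded node $w$ at $s_i$ already certifies a small enough second component, or because $f_2$ exceeds the stored value at $\mathit{goal}$. In the first situation, validity of $w$ and the lexicographic order give $g_1(w)\le$ the cost-1 of $q$'s prefix to $s_i$ and $g_2(w)\le$ its cost-2; splicing $w$'s path with $q$'s suffix $\langle s_i,\dots,s_n\rangle$ yields a $\mathit{start}$--$\mathit{goal}$ path whose cost is componentwise $\le\mathbf{c}$, hence equal to $\mathbf{c}$ by Pareto-optimality, and agreeing with $q$'s prefix cost at $s_i$ --- so the ``blockage'' can be rerouted through an already-expanded node without changing the cost, moving it strictly closer to $\mathit{goal}$. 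In the second situation, admissibility of $\mathbf{h}$ together with the lexicographic order directly exhibits an expanded $\mathit{goal}$ node whose cost weakly dominates $\mathbf{c}$, and we are done. A downward induction on $i=1,\dots,n$, each step using the standard fact that just before the relevant pop the first not-yet-expanded node on the current witness lies in \textit{Open} with $\mathbf{f}\le\mathbf{c}$ componentwise, drives the witness to $\mathit{goal}$. Finally I would note termination: $G$ is finite, edge costs are non-negative, and each state is expanded at most once per distinct cost of a Pareto-optimal partial path to it (of which there are finitely many), so only finitely many nodes are ever expanded and \textit{Open} empties --- at which point $\mathit{Sol}$ is precisely a set of cost-unique Pareto-optimal solution paths.
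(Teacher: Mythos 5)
Your proof is sound, but note that the paper itself does not prove this statement: Theorem~1 is quoted as a background result from the cited BOA* paper (Ulloa et al.), and the text explicitly refers the reader there ``for the detailed proof discussion.'' What you have written is, in effect, a reconstruction of that cited argument, and it hits all the right ingredients: the lexicographic non-decrease of extracted $\mathbf{f}$-values under a consistent heuristic, the dimensionality-reduction observation that at a fixed state expanded nodes have strictly increasing $g_1$ and strictly decreasing $g_2$ (so the single scalar $g_2^{\min}$ suffices for dominance checking), and the splice-and-reroute induction showing that every Pareto-optimal cost survives the lazy pruning. Your handling of the two pruning cases in the reachability claim --- rerouting through an already-expanded weakly dominating node when the per-state test fires, and exhibiting a dominating goal node via admissibility when the goal-bound test fires --- is exactly the crux, and your completeness step correctly uses Pareto-optimality to show the first cost-$\mathbf{c}$ goal node still passes its test. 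Two small imprecisions: in the soundness step you should state explicitly that the weakly dominating goal node is extracted \emph{before} $x$ (it follows because its $\mathbf{f}$-value is lexicographically smaller, and extractions are lexicographically ordered), and that $g_2^{\min}(\mathit{goal})\le c_2$ holds whether or not that node passes its own test. And the parenthetical in your termination argument is not quite the right invariant --- expanded nodes at a state need not all correspond to Pareto-optimal partial paths, since the goal-bound pruning can eliminate a dominating path before it is ever enqueued; what you actually have is that the expanded nodes at each state form an antichain with strictly decreasing $g_2$, and finiteness follows because the set of achievable walk costs below any bound is finite in a finite graph with non-negative weights. Neither issue affects the overall correctness of your argument.
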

BOA* utilises an efficient strategy to check nodes for their dominance, originally employed in \cite{DBLP:journals/cor/PulidoMP15} for multi-objective search.
The idea is simple yet powerful.
Let us assume A* explores the graph in the ($f_1,f_2$) order, that is, nodes are extracted based on their $f_1$-value in order (with tie-breaking on $f_2$-values).
Meanwhile, $\mathit{x}$ and $\mathit{y}$ are two nodes associated with the same state or $s(x)=s(y)$ in the \textit{Open} list where $\mathit{x}$ is going to be expanded first, i.e., we have $f_1(x) \leq f_1(y)$.
Since both nodes have used the same heuristic value as $h_1(s(x))=h_1(s(y))$ to determine their cost estimate $f_1$, we can conclude $g_1(x) \leq g_1(y)$.
Therefore,
the second node will be dominated by the first node if $g_2(x) \leq g_2(y)$ as shown in \cite{DBLP:journals/cor/PulidoMP15} in detail.
BOA* takes advantage of this dimension reduction technique by systematically keeping track of the $g_2$-value of the last non-dominated node using $g_2^{min}(s(x))$ via line \ref{alg:boa_enh:min_r} of Algorithm~\ref{alg:boa_enh}.

BOA* can also prune some of the dominated nodes during the expansion with a similar reasoning via line \ref{alg:boa_enh:prune1} of Algorithm~\ref{alg:boa_enh}.
This is done by comparing the newly generated node of a state and the last expanded node of the state against their secondary costs $g_2$.
Furthermore, BOA* prunes unpromising nodes based on their cost estimate to the $\mathit{goal}$ state, which is known as pruning by bound.
Given $g_2^{min}(goal)$ as the upper bound of the secondary cost, partial paths will be pruned if the cost estimate of their complete paths to $\mathit{goal}$ on $g_2$ is greater than that of the last solution already stored in $g_2^{min}(goal)$.
Interested readers are referred to the standard BOA* algorithm in \cite{ulloa2020simple} for the detailed proof discussion.\par
\textbf{Challenges:}
Lazy dominance checking in BOA* slows down the operations in the \textit{Open} list and consumes more space. 
In contrast to the costly linear dominance checking approach where new nodes are checked against all of the previously generated nodes associated with a state before their insertion into the \textit{Open} list, BOA* may add a node for which we have an unexpanded dominant node in \textit{Open}.
Thus, the search generates more nodes (using extra memory), and the \textit{Open} list will inevitably be longer.
Moreover, BOA* is only able to search the graph in one direction and with a specific objective ordering, 
whereas there can be cases with better performance on the reverse objective ordering as shown in \cite{ulloa2020simple}.
Our preliminary experiments also reveal that searching backwards (from $\mathit{goal}$ to $\mathit{start}$) may lead to significant improvements in the overall runtime.
There are also some inefficiencies in BOA* which can be addressed with extra considerations.
As an example,
for the simple graph in Figure~\ref{fig:example}, BOA* needs to expand all intermediate states for each individual solution, despite the fact that some of them are not offering any alternative (non-dominated) path to $\mathit{goal}$ (eg. $s_2$).
%
\section{Bi-directional Bi-objective A* Search}
Recent improvements in bi-directional heuristic search have introduced new techniques to reduce the number of necessary node expansions, such as Near-Optimal Bi-directional Search in \cite{DBLP:conf/aaai/SturtevantF18} and Dynamic Vertex Cover Bi-directional Search in \cite{DBLP:conf/aaai/ShperbergFSSH19}.
Given the single-objective nature of the conventional shortest path problem, none of the existing front-to-end or front-to-front algorithms can practically tackle the bi-objective shortest path problem without incorporating necessary modifications.
Moreover, those algorithms are not necessarily efficient for the bi-objective search as obtaining the solutions' cost would no longer be possible in $O(1)$.
In the conventional bi-objective setting where both searches work on the same objective, every state offers a set of non-dominated nodes (partial paths) in each direction, and handling frontier collisions (obtaining all of the complete $\mathit{start}$-$\mathit{start}$ joined paths of the state) would be an exhaustive process which can outweigh the speed-up achieved by expanding fewer nodes.
Our preliminary experiments also confirmed that the conventional front-to-end bi-directional search with an efficient partial paths coupling approach can potentially generate fewer nodes but shows poor performance compared to the unidirectional search scheme BOA*.\par
We now present our contributions to the problem by explaining our Bi-Objective Bi-directional A* search (BOBA*).
BOBA* employs two complementary (enhanced) uni-directional BOA* to search the solution space in both (forward and backward) directions with different objective orders (($f_1,f_2$) and ($f_2,f_1$)).
Therefore, since the algorithm does not perform partial paths coupling, we do not need to handle frontier collisions. In other words, each uni-directional BOA* is allowed to explore the entire graph towards the opposite end for each individual solution.
The high level structure of BOBA* is given in Algorithm~\ref{alg:boba_high}.
\begin{algorithm}[t]
\footnotesize
\caption{Bi-Objective Bidirectional A* (BOBA*) High-level}
\label{alg:boba_high}
\DontPrintSemicolon
  \DontPrintSemicolon 
    \SetKwBlock{DoParallel}{do in parallel}{end}
    \KwIn{A problem instance (G, {\bf cost}, $s_{start}$, $s_{goal}$)}
    \KwOut{A set of cost-unique Pareto-optimal solutions}
    \DoParallel{
      $ h_1', ub_2' \leftarrow$ \textbf{cost}-bounded A* from $s_{start}$ to $s_{goal}$ on G in ($f_1,f_2$) order \label{alg:boba_high:init_1} \;
      $ h_2, ub_1 \leftarrow$ \textbf{cost}-bounded A* from $s_{goal}$ to $s_{start}$ on Reversed(G) in ($f_2,f_1$) order \label{alg:boba_high:init_2} \;
    }
    \DoParallel{
      $h_2', ub_1' \leftarrow$ \textbf{cost}-bounded A* from $s_{start}$ to $s_{goal}$ on G in ($f_2,f_1$) order \label{alg:boba_high:init_3} \;
      $h_1, ub_2 \leftarrow$ \textbf{cost}-bounded A* from $s_{goal}$ to $s_{start}$ on Reversed(G) in ($f_1,f_2$) order \label{alg:boba_high:init_4} \;
    }
    \DoParallel{
        $Sol \leftarrow$ BOA*\textsubscript{enh} for (G, \textbf{cost}, $s_{start}$, $s_{goal}$) with heuristics (${\bf h}, {\bf ub}, {\bf h'}$) in ($f_1,f_2$) order \;
        $Sol' \leftarrow$ BOA*\textsubscript{enh} for (Rev(G), \textbf{cost}, $s_{goal}$, $s_{start}$) with heuristics (${\bf h'}, {\bf ub'}, {\bf h}$) in ($f_2,f_1$) order \;
    }
\Return{$Sol+Sol'$}

\end{algorithm}
BOBA* first obtains the preliminary heuristics and then performs two individual searches that explore the graph in both directions concurrently. 
The output will then be the aggregation of solutions found in each search routine.
To avoid searching for the same cost-optimum paths in both directions, BOBA* always chooses different orders for each direction.
Figure~\ref{fig:pareto_front_bounded}(Left) depicts the way Pareto-optimal solutions are found based on two searches in the two orders.
Initial solutions ($sol_{init}$) at both ends are typically the minimum cost paths already obtained via the heuristic searches for each objective.
These cost-optimum paths can also initialise the global upper bounds ($ub_1, ub_2$) needed by the pruning by bound strategies in BOA*.
The upper bounds are updated (always decreasing) during the search every time a valid solution is found, and $sol_{last}$ is the last solution for which we have had $f_1 < ub_1$ and $f_2 < ub_2$.
\begin{definition}
For every state $s\in S$, ${\bf ub}(s)$ is the upper bound on ${\bf cost}$ of complementary paths from state $\mathit{s}$ to $\mathit{goal}$, eg., $ub_1(s)$ denotes the upper bound on $cost_1$.
\end{definition}
\begin{definition}
A path/node/state $\mathit{x}$ is invalid if its estimated costs ${\bf f}(x)$ are not in the search global upper bounds ($ub_1, ub_2$), i.e., $\mathit{x}$ is invalid if $f_1(x) \ge ub_1 $ or $f_2(x) \ge ub_2 $.
\end{definition}
\begin{figure}[t]
\begin{subfigure}{0.44\textwidth}
\includegraphics[]{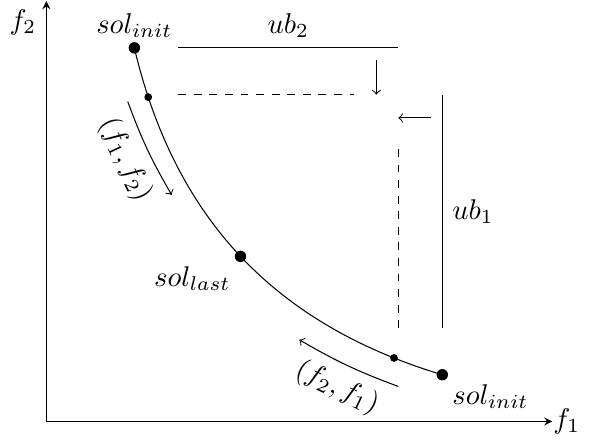}
\label{fig:pareto_front}
\end{subfigure}
\begin{subfigure}{0.54\textwidth}
\includegraphics[]{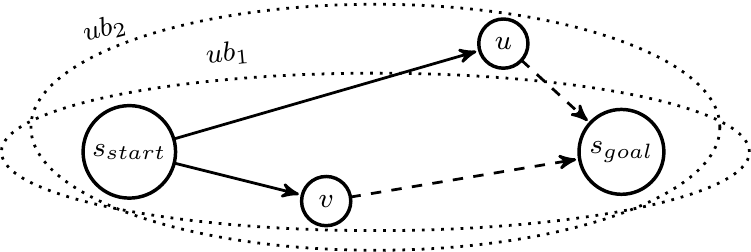}
\label{fig:bounded_search}
\end{subfigure}
\caption{ Left: Objective orders, bounds and Pareto-optimal solutions. Right: Schematic of states outside or inside of upper bounds. State \textit{u} is out of bounds for $f_1$ and will be discarded in $f_2$ search. }
\label{fig:pareto_front_bounded}
\end{figure}
\subsection{Preliminary Heuristics}
BOBA* requires both lower and upper bounds on the costs of complementary paths for each direction via four individual searches.
In each search, we calculate a state's upper bound to be the cost of the optimum path using the non-primary objective. 
For example, the optimum path to state $s$ for the first objective sets both $h_1(s)$ and $ub_2(s)$ (here $ub_2(s)$ is the cost of the path using the second objective).
BOA* traditionally uses two runs of Dijkstra's algorithm to initialise lower bounds.
For difficult cases, this initialisation time is usually outweighed by the main search time, but there can be simple cases where the total time of these heuristic searches dominates the main search time, especially in large instances.
As a more efficient initialisation approach, we replace Dijkstra's algorithm with cost-bounded A* (or cost-bounded Dijkstra without heuristics), as formally stated in Lemma~\ref{lemma:1} and shown in lines \ref{alg:boba_high:init_1}-\ref{alg:boba_high:init_4} of Algorithm~\ref{alg:boba_high}.
\begin{lemma} \label{lemma:1}
The preliminary A* search on $f_1$ (or $f_2$) can terminate before expanding a state with $f_1 > ub_1$ (or $f_2 > ub_2$).
\end{lemma}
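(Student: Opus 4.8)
The plan is to show that the early cutoff never discards anything that is actually used: each state the truncated search leaves unexpanded would, in the main BOA* search that later consumes this search's output, only ever surface as an \emph{invalid} node, so the heuristic entry $h_1(\cdot)$ and the secondary bound $ub_2(\cdot)$ that we fail to record for it are never read.

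First I would fix the plumbing. A preliminary search with primary objective $f_1$ runs from a source $\sigma$ to a target $\tau$, optionally guided by an admissible and consistent heuristic $\hat h_1$ (or by nothing, i.e.\ plain Dijkstra); when it expands a state $t$ it holds a $cost_1$-optimal path from $\sigma$ to $t$, and it stores $g_1(t)$ --- which at that moment equals the $cost_1$-optimal distance, call it $d_1(\sigma,t)$ --- as the heuristic value $h_1(t)$ passed to the \emph{opposite-direction} main search, together with that path's $cost_2$ as $ub_2(t)$. The scalar $ub_1$ is a global upper bound on $cost_1$ (the $cost_1$ of the $cost_2$-optimal start--goal path computed by a companion run), and by the definition of invalidity the main search discards any node $x$ as soon as $f_1(x)\ge ub_1$.

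Next I would invoke the textbook behaviour of A* under a consistent heuristic: states are expanded in non-decreasing order of their primary $f$-value, each with its optimal $g$-value \cite{hart1968formal}. Hence, the instant the truncated search is about to expand a state $s$ with $f_1(s)>ub_1$, every state $t$ it has not yet expanded satisfies $d_1(\sigma,t)+\hat h_1(t)\ge f_1(s)>ub_1$ (reading $+\infty$ for states never reached and $\hat h_1\equiv 0$ in the heuristic-free case). The last step is to carry this inequality into the main search: there a node at $t$ has primary value $g_1^{\mathrm{main}}(t)+h_1(t)$ with $h_1(t)=d_1(\sigma,t)$, while $g_1^{\mathrm{main}}(t)$ is the $cost_1$ of an actual path reaching $t$ from the end opposite to $\sigma$, hence $g_1^{\mathrm{main}}(t)\ge\hat h_1(t)$ by admissibility of $\hat h_1$ on the reversed graph. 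Therefore the main-search value at $t$ is at least $d_1(\sigma,t)+\hat h_1(t)>ub_1$, so $t$ is invalid for the whole run: it is never expanded, its absent $h_1$ and $ub_2$ entries are never queried, and whenever $t$ is generated as a successor it is pruned on sight --- exactly the effect of treating a missing entry as $+\infty$. The case of a preliminary search with primary objective $f_2$ and the bound $ub_2$ is symmetric.

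I expect the main obstacle to be the bookkeeping rather than any single estimate: pinning down which of the four preliminary runs (Algorithm~\ref{alg:boba_high}, lines~\ref{alg:boba_high:init_1}--\ref{alg:boba_high:init_4}) feeds which main search, and justifying the admissibility step $g_1^{\mathrm{main}}(t)\ge\hat h_1(t)$, which ultimately rests on the consistency of the whole preliminary heuristic scheme. A minor point to clear up along the way is the boundary case $f_1=ub_1$: such states may be expanded as well without harm, which is why the statement is phrased with the strict inequality $f_1>ub_1$.
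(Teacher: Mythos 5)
Your proposal is correct and rests on the same core fact as the paper's proof: under a consistent heuristic, A* expands states in non-decreasing order of the primary $f$-value, so every state left unexpanded at the cutoff satisfies $d_1(\sigma,t)+\hat h_1(t) > ub_1$, and hence every $\mathit{start}$--$\mathit{goal}$ path through it has primary cost exceeding $ub_1$. The only difference is the final step --- the paper concludes that all such paths are dominated by the optimum solution for the other objective, whereas you conclude that the main search would treat these states as invalid and prune them, so their missing $h_1$/$ub_2$ entries are never meaningfully consulted --- but these are two framings of the same observation and both are sound.
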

\begin{proof} 
Assume that a forward BOA* is intended and, therefore, the corresponding heuristics (via two backward searches) are required.
If we start with two simple backward A* searches (one for each objective), each optimum $\mathit{start}$-$\mathit{goal}$ path gives us two bounds as $(h_1(s_{start}),ub_2(s_{start}))$ and $(h_2(s_{start}),ub_1(s_{start}))$.
Now, given $h_1(s_{start})$ and $h_2(s_{start})$ as the global lower bounds on $f_1$ and $f_2$-values respectively, we will have $f_1 \geq h_1(s_{start})$ and $f_2 \geq h_2(s_{start})$ for every $\mathit{start}$-$\mathit{goal}$ path.
Therefore, any state with a cost estimate of $f_1 > ub_1(s_{start})$ in the A* search on the first objective, and similarly $f_2 > ub_2(s_{start})$ in the search on the second objective, will be dominated by one of the optimum solutions.
On the other hand, since A* expands states in an increasing order of $\mathit{f}$-values, each heuristic search can terminate early with the first out-of-bound state, guaranteeing that all paths via unexplored states are already dominated.
\end{proof}
Algorithm~\ref{alg:boba_high} shows the parallel computation of all necessary heuristics in BOBA* in two phases.
In the first phase (lines \ref{alg:boba_high:init_1}-\ref{alg:boba_high:init_2}), we can execute our cost-bounded A* using any admissible heuristic for the primary objective ($f_1$ or $f_2$) and with tie-breaking on the secondary objective ($f_2$ or $f_1$).
Note that the upper bounds are unknown prior to the searches in phase one, i.e., we initially have $ub_1=ub_2=\infty$, but we can update our global upper bounds as soon as we establish the optimal solution in each search.
The initialisation step of BOBA* can be further improved for the heuristic searches in the opposite direction in phase two (lines \ref{alg:boba_high:init_3}-\ref{alg:boba_high:init_4}).
Once the necessary heuristics in one direction have been obtained, the heuristic search in the opposite direction can use the lower bounds obtained from the first round as more informed heuristics. That is, the second phase of our cost-bounded A* searches are normally executed faster.
Moreover, the opposite search in the second round can take advantage of the reduced search space resulting from the first round, delivering better quality heuristics without needing to expand already invalidated (out-of-bound) states.
Lemma~\ref{lemma:2} states this technique more formally.\par
\textbf{Example:}
State $\mathit{v}$ in Figure~\ref{fig:pareto_front_bounded}~(right) is within the upper bound of both objectives and will be expanded in the opposite direction.
However, state $\mathit{u}$ is observed out-of-bound for the first objective (but within the bound of the second objective) and will then be discarded if it is going to be expanded in the second round of our heuristic searches.
Note that violating at least one objective's upper bound is enough to mark nodes (or states) invalid.
\begin{lemma}\label{lemma:2}
In the preliminary A* search on $f_1$ (or $f_2$), states with an estimated cost of $f_2 > ub_2$ (or $f_1 > ub_1$) are not part of any solution path.
\end{lemma}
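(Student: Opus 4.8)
The plan is to treat Lemma~\ref{lemma:2} as the dual of Lemma~\ref{lemma:1}: whereas Lemma~\ref{lemma:1} lets the preliminary search stop as soon as the \emph{primary} objective leaves its bound, here I want to show that a state whose estimate on the \emph{secondary} objective already exceeds the known bound cannot lie on any Pareto-optimal solution, and hence can be removed from the search space used by the second-phase searches of Algorithm~\ref{alg:boba_high}. Two ingredients are needed: (i) that $ub_2$ (resp.\ $ub_1$) is an upper bound on $cost_2$ (resp.\ $cost_1$) of every Pareto-optimal solution; and (ii) that the value $f_2$ attached to a state $s$ in the $(f_1,f_2)$-ordered search is a lower bound on $cost_2$ of every complete $s_{start}$-$s_{goal}$ path routed through $s$.

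Ingredient (i) I would establish as in the proof of Lemma~\ref{lemma:1}: throughout the preliminary phase $ub_2$ equals the second cost of the $s_{start}$-$s_{goal}$ path that is optimal for the first objective (the extreme solution returned by this preliminary search; it is only decreased later, and only by genuine solutions). If a Pareto-optimal solution $\pi$ had $cost_2(\pi)>ub_2$, then, because no $s_{start}$-$s_{goal}$ path has first cost below that of the first-objective-optimal solution, $\pi$ would be dominated by that solution, contradicting Pareto-optimality. Exchanging the two objectives gives the analogous bound for $ub_1$ and $cost_1$.

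Ingredient (ii) is where I expect the real work to lie. In the $(f_1,f_2)$-ordered search every state $s$ is first reached along a cost-optimal partial path (first objective leading, second breaking ties), so with the admissible heuristic $h_2$ obtained from the companion search on the second objective, the recorded $f_2=g_2+h_2(s)$ lower-bounds $cost_2$ of every completion of that particular partial path. The delicate point is that a Pareto-optimal solution through $s$ need not use the partial path the search recorded, so I must check that the stored $g_2$ does not exceed $cost_2$ of the $s_{start}$-$s$ prefix of \emph{any} Pareto-optimal solution through $s$; this is where the joint invariants maintained by the two coupled preliminary searches and the exactness of the precomputed $h_2$ come in, and, if the naive bound is not already tight, one compares against $ub_2$ the combination of $h_2(s)$ with the optimal $s_{start}$-to-$s$ second cost --- i.e.\ the smallest $f_2$ achievable at $s$.

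Granting (i) and (ii), the lemma follows by contradiction: if $s$ has $f_2>ub_2$ yet lies on a Pareto-optimal solution $\pi$, splitting $\pi$ at $s$ and applying (ii) gives $cost_2(\pi)\ge f_2>ub_2$, contradicting (i); hence no Pareto-optimal solution uses $s$, and $s$ may be dropped from the remaining opposite-direction preliminary searches and from the main BOA* runs without losing any solution --- exactly the reduced search space exploited in the second phase. The $(f_2,f_1)$-ordered case is identical once the roles of $(ub_1,cost_1)$ and $(ub_2,cost_2)$ are swapped. I expect (i) and the final contradiction to be routine, with essentially all the care concentrated in (ii).
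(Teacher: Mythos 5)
Your proposal takes essentially the same route as the paper: any $s_{start}$--$s_{goal}$ path through such a state has first cost no better than the first-objective-optimal solution and second cost exceeding $ub_2$, so it is dominated by that extreme solution and cannot be Pareto-optimal; this is precisely the paper's (very terse) argument, with your ingredient (i) plus the final contradiction corresponding to its single assertion that ``states with $f_2 > ub_2$ are dominated by the optimum path obtained for the first objective.'' The one place you go beyond the paper is ingredient (ii): you rightly flag that the argument is only sound if the $f_2$ attached to $s$ lower-bounds the second cost of \emph{every} path through $s$, not merely of the lexicographically first-objective-optimal prefix that the $f_1$-leading search happens to record --- the latter can overestimate, since a prefix that is worse on the first objective may be much better on the second. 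The paper's two-sentence proof silently assumes this lower-bound property; your remedy (certify a state only via the minimal achievable $f_2$, i.e.\ the optimal second cost of reaching $s$ combined with the exact $h_2(s)$) is the right way to make the lemma airtight, so your extra care on (ii) is a strengthening of, not a deviation from, the paper's argument.
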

\begin{proof}
States with $f_2 > ub_2$ are dominated by the optimum path obtained for the first objective.
This means that unexplored states with an estimated cost of $f_2 > ub_2$ are all invalid.
Therefore, the following search on $f_1$ can ignore expanding such states knowing that no non-dominated solution can be found via invalid states.
The same reasoning is valid for the reverse order.
\end{proof}
\subsection{Bi-directional Search}
BOBA* performs two enhanced BOA* concurrently, one from each direction. Algorithm~\ref{alg:boa_enh} shows the details of our first enhanced BOA* algorithm used in BOBA* (forward search in the ($f_1,f_2$) order).
Lines scripted in black are from the standard BOA* and the red lines with an asterisk (*) next to line numbers are our proposed enhancements.
To be consistent with the BOA* notation, we obtain the latest global upper bounds from $g_{min}$ values, i.e., we have $g_2^{min}(s_{goal})=ub_2$ and $g_1^{min}(s_{start})=ub_1$.
This is because the forward search on ($f_1,f_2$) updates $g_2^{min}(s_{goal})$ for every solution, whereas the backward search on ($f_2,f_1$) simultaneously updates $g_1^{min}(s_{start})$.
We also add a pruning criterion to discard nodes violating the primary upper bound $g_1^{min}(s_{start})$ in line \ref{alg:boa_enh:prune2}.
To achieve the backward search, we simply reverse the search direction and the objective ordering (see Appendix~\ref{app:boba_back}).
For example, instead of $g_2^{min}(s_{goal})$ and $h_1(s(x))$ in Algorithm~\ref{alg:boa_enh} we will have $g_1^{min}(s_{start})$ and $h_2'(s(x))$ respectively (the backward search establishes its $\mathit{f}$-values using ${\bf h'}$).
Note that each search has an independent \textit{Open} list.
Now we describe our contributions to the individual searches of BOBA* followed by their formal presentation in Lemmas~\ref{lemma:3}-\ref{lemma:4}.\par
\begin{algorithm}[t]
\footnotesize
\caption{Enhanced forward Bi-Objective A* (BOA*\textsubscript{enh}) in ($f_1,f_2$) objective ordering}
\label{alg:boa_enh}
\DontPrintSemicolon
\SetAlgoLined
 \KwInput{A problem instance (G, {\bf cost}, $s_{start}$, $s_{goal}$) and heuristics ({\bf h}, {\bf ub}, ${\bf h'}$)}
 \KwOutput{A set of cost-unique Pareto-optimal solutions}
 $Open \leftarrow \emptyset, Sol \leftarrow \emptyset$\ \;
 $g_1^{min}(s) \leftarrow g_2^{min}(s) \leftarrow \infty$ \ for each $s \in S$\ \;
 $x \leftarrow $ new node with $s(x) = s_{start}$\ \;
 $ {\bf g}(x) \leftarrow (0,0) $, $ {\bf f}(x) \leftarrow (h_1(s_{start}),h_2(s_{start})) $, $parent(x) \leftarrow Null$ \;
Add $x$ to $Open$\;
\While{$Open \neq \emptyset$}
{
 Remove a node $x$ with the lexicographically smallest ($f_1,f_2$) values from $Open$ \;
     \textcolor{red}{ 
       \label{alg:boa_enh:termination}\nlast
        \lIf{$f_1(x) \geq g_1^{min}(s_{start})$} 
      {  \textbf{break} }
      } 
     
      \lIf{$g_2(x) \geq g_2^{min}(s(x)) $ \textbf{or} $f_2(x) \geq g_2^{min}(s_{goal}) $ \label{alg:boa_enh:prune0}}  
     {\textbf{continue}} 
     
     \textcolor{red}{
     \label{alg:boa_enh:tuning}\nlast
     \lIf{$g_2^{min}(s(x)) = \infty$}
    { $h_1'(s(x)) \leftarrow g_1(x)$ } 
   }
    $g_2^{min}(s(x)) \leftarrow g_2(x)$ \label{alg:boa_enh:min_r} \;
    \If{$s(x) = s_{goal}$} 
     { 
        \textcolor{red}{
        \label{alg:boa_enh:tie0}\nlast   $z \leftarrow $ last node in $Sol$ \; 
        \nlast   \lIf{$(z \neq Null $ \textbf{and} $f_1(z) = f_1(x))$} { Remove $z$ from $Sol$ } \label{alg:boa_enh:tie1}
        }
        Add $x$ to $Sol$ \;
        \textbf{continue}
    }
     \textcolor{red}{
     \label{alg:boa_enh:early_sol}\nlast
     \If{$g_2(x) + ub_2(s(x)) < g_2^{min}(s_{goal})$}
    {     \nlast  $g_2^{min}(s_{goal}) \leftarrow g_2(x) + ub_2(s(x))$ \;
         \nlast   $z \leftarrow $ last node in $Sol$ \; \label{alg:boa_enh:tie2}
         \nlast   \lIf{$(z \neq Null$ \textbf{and} $f_1(z) = f_1(x))$} { Remove $z$ from $Sol$ } \label{alg:boa_enh:tie3}
         \nlast   Add $x$ to $Sol$ \;
         \nlast\lIf{$h_1(s(x))= ub_1(s(x))$}
       { \textbf{continue} } \label{alg:boa_enh:unique_path}
    }
    }
    
         \For{$all\ t \in Succ(s(x))$}
        { $y \leftarrow $ new node with $s(y) = t$ \;
             ${\bf g}(y) \leftarrow {\bf g}(x) + {\bf cost} (s(x),t)$ \;
             ${\bf f}(y) \leftarrow {\bf g}(y) + {\bf h} (t)$ \;
             $parent(y) \leftarrow x$ \;
            \lIf{$g_2(y) \geq g_2^{min}(t) $ \textbf{or} $f_2(y) \geq g_2^{min}(s_{goal}) $ \label{alg:boa_enh:prune1}} 
            { \textbf{continue} }
            
            \textcolor{red}{\label{alg:boa_enh:prune2}\nlast
            \lIf{$f_1 \geq g_1^{min}(s_{start}) $ }
            { \textbf{continue} } 
            }
             Add $y$ to $Open$\;
        }
}
\Return{$Sol$}\; 
\end{algorithm}
\textbf{Early solution update:}
{This strategy allows the search to update the secondary upper bound and possibly establish a solution before reaching the $\mathit{goal}$ state.}
This is done via line \ref{alg:boa_enh:early_sol} of Algorithm~\ref{alg:boa_enh} by coupling nodes with their complementary shortest path to $\mathit{goal}$.
If the joined path is valid, the corresponding node is then temporarily added to the solution set knowing that solution nodes with a state other than $\mathit{s_{goal}}$ (or $s(x) \neq s_{goal}$) must be joined with their complementary shortest path.
This strategy can be further improved by not expanding nodes for which we have a unique non-dominated complementary path.
This heuristic is incorporated in line \ref{alg:boa_enh:unique_path} and is formalised in Lemma~\ref{lemma:3}.\par
\textbf{Secondary heuristic tuning:}
Bi-directional search provides our algorithm with a great opportunity to further improve the quality of the preliminary heuristics.
Since the main search of BOBA* has more information about non-dominated paths to states and constantly updates upper bounds, there can be more outliers that our preliminary heuristic searches are not aware of.
Therefore, benefiting from the main property of BOA* (finding non-dominated nodes in order), we can tune our findings over the preliminary searches and empower the pruning by bound strategy of the concurrent search in the opposite direction.
This tuning is done in O(1) time by updating the secondary heuristics of the reverse direction via line \ref{alg:boa_enh:tuning} of Algorithm~\ref{alg:boa_enh}.
Note that $h_1'$ denotes the secondary heuristic in the backward search where BOBA* uses $f_2$ as its primary cost.
We discuss the correctness of this technique in Lemma~\ref{lemma:4}.
%
\begin{figure}[t]
\begin{subfigure}{0.49\textwidth}
\includegraphics[]{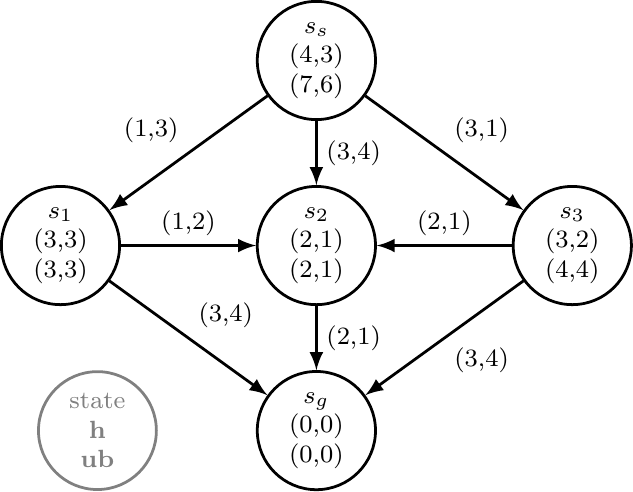}
\end{subfigure}
\begin{subfigure}{0.49\textwidth}
\centering
\footnotesize
\renewcommand{\arraystretch}{1}
\begin{tabular}{ c   l  c   c}
    \toprule
     & \textit{Open} list & \textit{Sol}  & Update \\
    It. &  ($s(x)$, \textbf{g}(x), \textbf{f}(x)) & found & $g^{min}_2(s_{g})$\\
    \midrule
    1 & $\uparrow$($s_{s}$, (0,0), (4,3)) & (4,6)  & $\infty \rightarrow$ 6\\
    \midrule
    2 & $\uparrow$($s_2$, (3,4), (5,5)) & (5,5) & 6 $\rightarrow$ 5\\
     & \ ($s_3$, (3,1), (6,3))  &  & \\
    \midrule
    3 & $\uparrow$($s_3$, (3,1), (6,3))  &  & \\
    \midrule
    4 & $\uparrow$($s_2$, (5,2), (7,3))  &  (7,3) & 5 $\rightarrow$ 3 \\
    \midrule
    5 & empty  &   &  \\
    \bottomrule
\end{tabular}
\newline
\vspace*{0.3 cm}
\newline
\setlength{\tabcolsep}{4.5pt}
\begin{tabular}{|l| *{5}{c|} }
    \toprule
    parent arrays & $s_s$ & $s_1$ & $s_2$ & $s_3$ & $s_g$\\
    \midrule
    \texttt{par\_state} & [Null] & & [$s_s$, $s_3$] & [$s_s$] & \\
    \midrule
    \texttt{par\_path\_id}& [Null] & & [1, 1] & [1] & \\
    \bottomrule
\end{tabular}

\end{subfigure}
\caption{ Left: An example graph with \textbf{cost} on the edges, and with (state, \textbf{h}, \textbf{ub}) inside the nodes.
Right: Status of the \textit{Open} list, new solution (\textit{Sol}) and secondary upper bound $g_2^{min}(s_g)$ in every iteration (It.) for the forward search on the $(f_1,f_2)$ ordering. Symbol $\uparrow$ beside nodes denotes the expanded min-cost node.
The second table shows the status of the parent arrays of the states when the search terminates.}
\label{fig:example}
\end{figure}
\par
\textbf{Example:}
We explain these strategies by just running the forward search of BOBA* for the graph in Figure~\ref{fig:example} and iterations in Table~\ref{fig:example}.
In the first iteration, the forward search explores the node associated with the start state $\mathit{s_s}$. 
Since the primary (heuristic) cost-optimum path from $\mathit{s_s}$ is initially valid, the search immediately updates its secondary upper bound via the early solution update strategy by setting $g_2^{min}(s_g) = 6$ and adds the node into the \textit{Sol} set with costs ($4,6$).
During the $\mathit{s_s}$ expansion, we notice that the extended path for state $\mathit{s_1}$ is invalid ($f_2(y) \ge g_2^{min}(s_g)$ or $3+3 \ge 6$).
Therefore, the partial path to $\mathit{s_1}$ is pruned meaning that $\mathit{s_2}$ is no longer reachable via its primary cost optimum path.
Nodes generated for states $\mathit{s_2}$ and $\mathit{s_3}$, however, are successfully added to \textit{Open}.
In the second iteration, the algorithm picks the node associated with $\mathit{s_2}$ (with higher priority).
Now, since this is the first time we see $\mathit{s_2}$ being expanded, and since future visits will always have higher costs (via $\mathit{s_3}$ with $g_1=5$ for example), we can update the lower bound of reaching $\mathit{s_2}$ from $\mathit{s_s}$ knowing that all possible shorter paths have already been invalidated.
This is done by updating $h_1'(s_2)=3$.
Note that from the preliminary heuristics, we already had $h_1'(s_2)=2$ (lower bound from $\mathit{s_s}$ to $\mathit{s_2}$).
After this update, the backward search would have better quality secondary lower bounds and can effectively prune more nodes (the backward primary heuristic is $h_2'$).
We skip the backward search for now and continue with our forward expansions.
As coupling the node (associated with $s_2$) with its (complementary) primary cost-optimum path yields a valid complete path, the search updates its secondary upper bound and temporarily adds the node to the \textit{Sol} set with costs ($5,5$). 
The search also skips expanding $s_2$ as it does not offer any non-dominated path to $s_g$.
In the third iteration, the node associated with state $s_3$ is picked.
This time, $s_3$ is expanded since coupling does not yield valid path.
During the $s_3$ expansion, the search finds $s_g$ invalid but adds $s_2$ into \textit{Open}.
In the fourth iteration, the node associated with $s_2$ is the only node in \textit{Open} which reveals the final solution with costs ($7,3$), again with the early solution update strategy.
This last solution also verifies that the temporary solution found in the second iteration is now a valid non-dominated solution, since the primary cost of the last solution is larger than that of the second solution ($5<7$).
\par
Now we formally prove the correctness of the presented techniques as follows.
\begin{lemma}\label{lemma:3}
At every iteration, if $g_2(x) + ub_2(s(x)) < g_2^{min}(s_{goal})$, the next solution has a primary cost of $f_1(x)$ and a secondary cost of at most $g_2(x) + ub_2(s(x))$. Node $\mathit{x}$ is also a terminal node if $h_1(s(x))= ub_1(s(x))$.
\end{lemma}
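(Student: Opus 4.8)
Throughout, write $P$ for the $s_{start}$--$s_{goal}$ path obtained by concatenating the concrete $s_{start}$--$s(x)$ path recorded through $parent(\cdot)$ in the node $x$ currently being extracted with the stored complementary path $\pi_{s(x)}$ from $s(x)$ to $s_{goal}$ that the preliminary backward search used to set $h_1$ and $ub_2$. The plan is to prove the two assertions separately, using (i) the exactness of the preliminary heuristics on the states the main search touches, i.e.\ $cost_1(\pi_{s(x)})=h_1(s(x))$ and $cost_2(\pi_{s(x)})=ub_2(s(x))$, which follows from Lemmas~\ref{lemma:1}--\ref{lemma:2} since a node routed through an out-of-bounds state is never expanded; and (ii) the correctness of BOA* recalled above, in the form that the (enhanced) forward search reports all cost-unique Pareto-optimal solutions in non-decreasing order of primary cost, once the new pruning rules are shown sound. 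From (i) we get $cost_1(P)=g_1(x)+h_1(s(x))=f_1(x)$ and $cost_2(P)=g_2(x)+ub_2(s(x))$, so under the hypothesis $g_2(x)+ub_2(s(x))<g_2^{min}(s_{goal})$ the path $P$ has secondary cost strictly below $g_2^{min}(s_{goal})$. By lines~\ref{alg:boa_enh:prune0} and~\ref{alg:boa_enh:early_sol} the quantity $g_2^{min}(s_{goal})$ equals the secondary cost of the last solution placed in $Sol$ and the secondary costs of solutions in $Sol$ are strictly decreasing, hence every recorded solution has secondary cost $\ge g_2^{min}(s_{goal})>cost_2(P)$; therefore $P$ is not even weakly dominated by any recorded solution.

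Next I would close the first assertion with an ordering argument. Since the heuristic is consistent, the $\mathbf{f}$-values of extracted nodes form a lexicographically non-decreasing sequence, and a later early-update solution inherits the primary cost of the node then being expanded; so every solution recorded after this iteration has primary cost $\ge f_1(x)$, and together with ingredient (ii) this shows that every Pareto-optimal solution of primary cost $<f_1(x)$ is already in $Sol$. There is always a Pareto-optimal cost pair that weakly dominates $P$; it has primary cost $\le f_1(x)$, it cannot have primary cost $<f_1(x)$ (such a solution is recorded, but no recorded solution weakly dominates $P$), so it has primary cost exactly $f_1(x)$, and its secondary cost is at most $cost_2(P)=g_2(x)+ub_2(s(x))$. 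Hence the next reported solution has primary cost $f_1(x)$ and secondary cost at most $g_2(x)+ub_2(s(x))$; when that inequality is strict, the node realising this Pareto point is extracted later and replaces $x$ in $Sol$ through the tie-breaking in lines~\ref{alg:boa_enh:tie2}--\ref{alg:boa_enh:tie3}, which is exactly why the bound is stated as ``at most''.

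For the second assertion, assume in addition $h_1(s(x))=ub_1(s(x))$. As $h_1(s(x))$ is the minimum $cost_1$ from $s(x)$ to $s_{goal}$ while $ub_1(s(x))$ is the $cost_1$ of a $cost_2$-optimal such path, equality forces a single path to be simultaneously $cost_1$- and $cost_2$-optimal, so $s(x)$ admits a \emph{unique} non-dominated complementary path to $s_{goal}$, of cost $(h_1(s(x)),ub_2(s(x)))$. Consequently the only non-dominated $s_{start}$--$s_{goal}$ path that extends the prefix stored in $x$ is $P$, which has just been inserted into $Sol$; any successor generated by expanding $x$ lies on a completion that is dominated by $P$ or cost-identical to $P$, hence contributes nothing to a cost-unique Pareto set. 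Therefore $x$ is terminal, justifying the \textbf{continue} at line~\ref{alg:boa_enh:unique_path}.

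The main obstacle is ingredient (ii): one must show that the new termination and pruning rules (lines~\ref{alg:boa_enh:termination},~\ref{alg:boa_enh:prune0},~\ref{alg:boa_enh:prune1},~\ref{alg:boa_enh:prune2}), together with the early-solution updates and the $h_1'$ tuning (line~\ref{alg:boa_enh:tuning}), preserve completeness and the non-decreasing-primary-cost reporting order of standard BOA*; only then is the Pareto point of primary cost $f_1(x)$ genuinely the one reported next rather than one reachable later. I would handle this by induction over extractions: whenever a node $y$ is discarded or bypassed by one of the new rules, exhibit either a solution already in $Sol$ or a still-live node in $Open$ that weakly dominates every $s_{start}$--$s_{goal}$ completion through $y$, mirroring BOA*'s classical pruning-by-bound argument and using the monotonicity of $g_1^{min}(s_{start})$ and $g_2^{min}(s_{goal})$ and the fact that they coincide, respectively, with the extremal costs of the last solutions reported by the backward and forward searches.
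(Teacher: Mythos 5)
Your proposal is correct and follows essentially the same route as the paper's proof: compute the cost of the joined path as $(f_1(x),\,g_2(x)+ub_2(s(x)))$ from the exactness of the preliminary heuristics, use the increasing-$f_1$ extraction order to conclude the next solution has primary cost exactly $f_1(x)$ and secondary cost at most that of the joined path, and use the secondary-cost tie-breaking in the preliminary search to argue that $h_1(s(x))=ub_1(s(x))$ yields a unique non-dominated complementary path, making $x$ terminal. Your version is more careful than the paper's (which argues informally that the search "navigates" along the complementary path) and rightly flags that the full argument leans on the soundness of the new pruning rules, which the paper does not establish within this lemma either.
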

\begin{proof}
If the joined path is valid (its secondary cost is within the bounds),
expanding nodes on the complementary shortest path will definitely navigate us to $s_{goal}$ with a valid secondary cost as they offer the same $f_1$-value.
This means we can efficiently update the secondary upper bound earlier assuming that a potential solution path is already established.
Therefore, valid joined paths determine the primary cost $f_1$ of the next solution along with setting a new upper bound for the secondary cost $f_2$.
Furthermore, given the secondary cost as a tie-breaker in the preliminary heuristic searches, states with $h_1(s(x))= ub_1(s(x))$ would only offer one complementary path optimum for both objectives.
As none of the states on the complementary path would offer an alternative non-dominated path to $\mathit{s_{goal}}$, the search can save time by not expanding such terminal nodes.
Therefore, nodes with $h_1(s(x))= ub_1(s(x))$ are terminal nodes if they appear on any solution path.
\end{proof}
The early solution update strategy above guarantees that the primary cost $f_1$ of the next solution is determined by the joined path, but this does not apply to its secondary cost. E.g., we might see two consecutive temporary solutions with the same $f_1$-value but different (sequentially) valid secondary costs.
Therefore, the search needs to make sure that the previously added solution is not dominated by the next potential solution, and if it is dominated, it must be removed from the non-dominated solution set $Sol$.
We address this matter in O(1) time by checking our last (temporary) solution against new solutions for dominance as shown in lines \ref{alg:boa_enh:tie0}-\ref{alg:boa_enh:tie1} and \ref{alg:boa_enh:tie2}-\ref{alg:boa_enh:tie3} of Algorithm~\ref{alg:boa_enh}.
This pruning is formally stated in the following Lemma~\ref{lemma:3_1}.
\begin{lemma}\label{lemma:3_1}
Given $z$ and $x$ as the last and new temporary solution nodes respectively, node $z$ represents a non-dominated solution if $f_1(z) < f_1(x)$.
The temporary solution node $z$ is dominated by $x$ if $f_1(z) = f_1(x)$.
\end{lemma}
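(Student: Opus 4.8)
The plan is to expose the rigid order in which temporary solutions are written into $Sol$ and read both claims off from it. I will establish two invariants about the sequence of solution nodes, listed in the order they are added. For this I record, for a node $x$ that is placed in $Sol$, its \emph{attached solution cost}: this is $\mathbf{g}(x)$ when $s(x)=s_{goal}$ and it is the joined cost $\bigl(g_1(x)+h_1(s(x)),\,g_2(x)+ub_2(s(x))\bigr)$ when $x$ is added via the early-solution update on line~\ref{alg:boa_enh:early_sol}; in both cases the primary component equals $f_1(x)$, using $h_1(s_{goal})=0$ in the first case.

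\emph{(A) The primary costs are non-decreasing.} Since $h_1$ is consistent, every successor $y$ of an expanded node $x$ satisfies $f_1(y)=g_1(y)+h_1(s(y))\ge g_1(x)+h_1(s(x))=f_1(x)$, so the smallest $f_1$-value present in $Open$ never decreases and nodes leave $Open$ in non-decreasing order of $f_1$. As a solution is recorded only for an extracted node and its attached primary cost is $f_1(x)$ (here Lemma~\ref{lemma:3} is what guarantees that the joined path is indeed realized with primary cost $f_1(x)$), if $z$ enters $Sol$ before $x$ then $f_1(z)\le f_1(x)$; in particular the two cases $f_1(z)<f_1(x)$ and $f_1(z)=f_1(x)$ of the lemma are exhaustive. \emph{(B) The secondary costs are strictly decreasing.} The variable $g_2^{min}(s_{goal})$ is modified \emph{only} when a solution is recorded: for $s(x)=s_{goal}$, line~\ref{alg:boa_enh:prune0} forces \textbf{continue} unless $f_2(x)<g_2^{min}(s_{goal})$, after which line~\ref{alg:boa_enh:min_r} assigns $g_2^{min}(s_{goal})=f_2(x)$; via the early-solution update the guard of line~\ref{alg:boa_enh:early_sol} is exactly $g_2(x)+ub_2(s(x))<g_2^{min}(s_{goal})$ and this value is then stored. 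In both cases the new value of $g_2^{min}(s_{goal})$ equals the secondary component of the solution attached to $x$, so consecutively recorded solutions have strictly decreasing secondary cost.

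With (A) and (B) the lemma follows. Suppose $f_1(z)=f_1(x)$. As $z$ is the last node currently in $Sol$, no solution was recorded between $z$ and $x$, so when $x$ is processed $g_2^{min}(s_{goal})$ still equals the secondary cost of $z$, and by (B) the secondary cost of $x$ is strictly smaller; combined with equal primary costs this is precisely the dominance relation (same first coordinate, strictly smaller second coordinate), so $z$ is dominated by $x$ and is correctly deleted on lines~\ref{alg:boa_enh:tie1}/\ref{alg:boa_enh:tie3}. Suppose instead $f_1(z)<f_1(x)$; I claim $z$ stays in $Sol$ as a non-dominated solution. Every solution recorded after $z$ (in particular $x$) has, by (A), primary cost $\ge f_1(x)>f_1(z)$, hence cannot dominate $z$. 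Every solution recorded before $z$ has primary cost $\le f_1(z)$: if it is strictly smaller, then by (B) its secondary cost is strictly larger than that of $z$, so it does not dominate $z$; if it equals $f_1(z)$, then by (A) it lies in an uninterrupted run of equal-$f_1$ solutions ending at $z$, and each member of such a run is deleted by the same tie-break when its successor in the run is recorded, so only $z$ survives. Thus no solution retained in $Sol$ dominates $z$, which (together with the completeness of the underlying BOA* search and Lemma~\ref{lemma:3}) makes $z$ a genuine Pareto-optimal solution.

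The step I expect to be the main obstacle is handling the two distinct ways a node becomes a solution uniformly: for an early-update node the relevant cost is the joined cost $\bigl(f_1(x),\,g_2(x)+ub_2(s(x))\bigr)$ rather than $\mathbf{f}(x)$, and one must verify that it is this joined cost — not $\mathbf{f}(x)$ — that is governed by the $f_1$-extraction order and by the evolution of $g_2^{min}(s_{goal})$; this is exactly where Lemma~\ref{lemma:3} is invoked. A secondary subtlety is that the algorithm only ever compares a new solution against the \emph{immediately} preceding one, so the $f_1(z)<f_1(x)$ case must lean on invariant (A) to ensure that any earlier solution sharing $z$'s primary cost was already cleared by a cascade of single-step removals.
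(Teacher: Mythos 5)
Your proposal is correct and follows essentially the same route as the paper's proof: both arguments rest on the fact that nodes are extracted in non-decreasing order of $f_1$ (so $f_1(z)\le f_1(x)$) and that $g_2^{min}(s_{goal})$ holds the secondary cost of the last recorded solution and strictly decreases with each new one, which together yield the two cases of the lemma. Your write-up is somewhat more thorough than the paper's — you also verify that $z$ is not dominated by solutions recorded before it and that equal-$f_1$ runs are cleared by cascading single-step removals — but this is a refinement of the same argument, not a different approach.
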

\begin{proof}
Since the secondary cost of the new solution $x$ is already checked to be smaller than that of the last solution $z$ stored in $g_2^{min}(s_{goal})$, we have $f_2(x) < g_2^{min}(s_{goal})$ if $s(x) = s_{goal}$ or $g_2(x) + ub_2(s(x)) < g_2^{min}(s_{goal})$ if $s(x) \neq s_{goal}$.
On the other hand, since $x$ is the new potential solution and the search explores nodes in an increasing order of $f_1$-values, we must have $f_1(z) \leq f_1(x)$.
Therefore, if $f_1(z) < f_1(x)$, we can see that the temporary solution $z$ is now a non-dominated solution.
Otherwise, if $f_1(z) = f_1(x)$, the last solution $z$ is dominated by the new solution $x$ because the new solution offers a lower secondary cost.
\end{proof}
We now show the correctness of the heuristic tuning approach in BOBA*.
\begin{lemma}\label{lemma:4}
The secondary heuristic tuning maintains the correctness of A* heuristics.
\end{lemma}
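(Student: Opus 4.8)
The plan is to prove that the assignment $h_1'(s(x)) \leftarrow g_1(x)$ performed in line~\ref{alg:boa_enh:tuning} leaves $h_1'$ an admissible heuristic for the concurrent backward search, and it is enough to verify admissibility, since $h_1'$ enters that search only through the $f_1$-values used for $(f_2,f_1)$ pruning-by-bound and tie-breaking (it plays no role in the backward dimension-reduction on $g_1^{min}$, nor in the forward search at all). Concretely, I would show that $g_1(x)$ does not exceed the first-objective cost of the $s_{start}$-to-$s(x)$ portion of any cost-unique Pareto-optimal solution; as $g_1(x)$ is literally the $cost_1$ of a concrete $s_{start}$-to-$s(x)$ path, this value also dominates the preliminary estimate of $h_1'(s(x))$, so in the typical case the update merely tightens an already valid lower bound (and, when the preliminary search stopped short of $s(x)$, it installs a finite admissible value in place of the vacuous one).

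The first ingredient is the property of BOA* that the paper relies on repeatedly: because the forward search pops nodes in lexicographic $(f_1,f_2)$ order and every node of a fixed state $s$ carries the \emph{same} heuristic term $h_1(s)$, the forward search expands the nodes of state $s$ in non-decreasing order of their $g_1$-values. Hence, at the moment line~\ref{alg:boa_enh:tuning} fires --- its guard $g_2^{min}(s(x))=\infty$ certifying that $x$ is indeed the first node of state $s(x)$ to be expanded --- the value $g_1(x)$ is the minimum of $g_1(y)$ over every node $y$ with $s(y)=s(x)$ that the forward search ever expands: a node $y$ of state $s(x)$ with $g_1(y)<g_1(x)$ has $f_1(y)<f_1(x)$ and is therefore popped strictly before $x$, and, since $g_2^{min}(s(x))$ is still $\infty$ so that the dominance clause of line~\ref{alg:boa_enh:prune0} is vacuous, $y$ would itself be expanded before $x$ unless $y$ --- or one of its ancestors, in which case $y$ was never even generated --- had been removed by a pruning rule of Algorithm~\ref{alg:boa_enh}.

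It then remains to exclude that an $s_{start}$-to-$s(x)$ path $p'$ with $cost_1(p')<g_1(x)$ is a prefix of a cost-unique Pareto-optimal solution. By the previous paragraph the node built along $p'$ is pruned, or an ancestor of it along $p'$ is pruned; and every subpath removed by the dominance checks (lines~\ref{alg:boa_enh:prune0} and \ref{alg:boa_enh:prune1}) or by the bound checks (the secondary-bound tests of lines~\ref{alg:boa_enh:prune0}, \ref{alg:boa_enh:prune1} and the primary-bound test of line~\ref{alg:boa_enh:prune2}) can be completed only into solutions that are weakly dominated by, or cost-identical to, a solution in BOBA*'s output --- this is precisely the pruning soundness of BOA*~\cite{ulloa2020simple} together with the primary-bound argument behind line~\ref{alg:boa_enh:prune2}, and, for subpaths that the preliminary searches never created, Lemmas~\ref{lemma:1}--\ref{lemma:2}. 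Consequently every $s_{start}$-to-$s(x)$ prefix of a cost-unique Pareto-optimal solution has first-objective cost at least $g_1(x)$, so $g_1(x)$ is a legitimate lower bound on the $cost_1$ still needed to connect, in the reverse direction, any backward partial path at $s(x)$ back to $s_{start}$: $h_1'$ stays admissible. Finally, because line~\ref{alg:boa_enh:tuning} fires at most once per state and only raises $h_1'(s(x))$ to this admissible value, any backward node processed before the update used an even more conservative admissible estimate and any processed afterwards uses the tightened one, so the concurrent search never prunes away a node that could still carry a needed solution.

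The step I expect to be the main obstacle is making ``all cheaper-$cost_1$ paths to $s(x)$ are already invalidated'' fully rigorous. One has to argue both that such a path's node cannot merely be sitting unexpanded in $Open$ (dispatched by the $f_1$-ordering observation and by the fact that line~\ref{alg:boa_enh:prune0}'s dominance clause is still vacuous) and that a node removed by a pruning rule is genuinely irrelevant rather than only unpromising under the current, still-shrinking bounds --- which means leaning squarely on the already-established correctness of BOBA*'s (and BOA*'s) pruning instead of re-deriving it, and being careful that $g_2^{min}(s_{goal})$ and $g_1^{min}(s_{start})$ only ever decrease, so that a once-valid pruning cannot be retroactively invalidated. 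The concurrency of the two searches is a comparatively mild point, handled by the monotonicity of the update noted above.
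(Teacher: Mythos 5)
Your proposal is correct and follows essentially the same route as the paper's proof: the guard $g_2^{min}(s(x))=\infty$ certifies that $x$ is the first expanded node of its state, the lexicographic $(f_1,f_2)$ order then makes $g_1(x)$ the minimum primary cost over all nodes of that state that can still lead to a needed solution, and since the update touches only the \emph{secondary} heuristic of the opposite-order search, both A* searches remain correct. You are in fact more careful than the paper on the one delicate point --- that cheaper $s_{start}$-to-$s(x)$ paths may exist in the graph but, having been pruned, cannot be prefixes of any non-dominated solution still to be found, which is the precise sense in which the tightened $h_1'$ remains admissible.
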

\begin{proof}
BOBA* expands partial paths in the increasing order of $\mathit{f}$-values.
This means the first expanded node of every state is guaranteed to have the minimum valid primary cost $g_1$ in each search direction, and all of the following valid nodes will have a larger primary cost.
Moreover, since BOBA* uses different objective ordering for its searches, updated lower bounds in one direction represent the secondary heuristics of the other direction.
Therefore, we can guarantee that the updated secondary heuristic is still admissible as there will not be any min-cost path to states better than what their first expanded node presents.
Furthermore, the tuning strategy only updates the secondary heuristics of the opposite search, i.e., $h_1'(s(x))$ in the forward and $h_2(s(x))$ in the backward search.
Therefore, the preliminary primary heuristics $h_1(s(x))$ and $h_2'(s(x))$ are unchanged and the A* searches are correct.
\end{proof}
Considering the correctness of the enhancements presented above, we now show the correctness of our BOBA* algorithm.
\begin{theorem}
BOBA* returns a set of cost-unique non-dominated solution paths.
\end{theorem}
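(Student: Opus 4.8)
The plan is to reduce the statement to three ingredients: the correctness of standard BOA* quoted above, the soundness of the individual enhancements already established in Lemmas~\ref{lemma:1}--\ref{lemma:4}, and a hand-off argument describing how the two concurrent searches partition the Pareto front through the shared upper bounds. I would first dispatch termination: the preliminary searches terminate by Lemma~\ref{lemma:1}, and each enhanced BOA* terminates because $G$ is finite, $g_1^{min}(s_{start})$ and $g_2^{min}(s_{goal})$ are monotonically non-increasing, and every extracted node is either pruned, strictly decreases the finite quantity $g_2^{min}(s(x))$ (line~\ref{alg:boa_enh:min_r}), or fires the \textbf{break} on line~\ref{alg:boa_enh:termination}.

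Next I would show that each of the two directions is a \emph{bound-restricted} BOA* that returns a cost-unique set of Pareto-optimal paths contained in the true front. For the forward search, the $g_2^{min}(s(\cdot))$ dimension-reduction bookkeeping is inherited verbatim from BOA*, so the quoted correctness theorem covers the black lines; Lemma~\ref{lemma:4} guarantees that the tuned secondary heuristic stays admissible and consistent, so the search remains a legitimate A*; Lemma~\ref{lemma:3} shows that the early-solution-update block records precisely the primary cost of the next solution together with a valid secondary upper bound and that terminal nodes may be skipped; and Lemma~\ref{lemma:3_1} shows that the tie-removal on lines~\ref{alg:boa_enh:tie0}--\ref{alg:boa_enh:tie1} and \ref{alg:boa_enh:tie2}--\ref{alg:boa_enh:tie3} keeps $Sol$ free of intra-set dominated entries. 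The additional pruning by the primary bound (lines~\ref{alg:boa_enh:termination} and \ref{alg:boa_enh:prune2}) only discards nodes with $f_1 \geq g_1^{min}(s_{start})$; by the reasoning of Lemma~\ref{lemma:1} such a node cannot lie on any solution whose primary cost is below the final value of $g_1^{min}(s_{start})$, so no Pareto-optimal solution with primary cost below that bound is lost. The symmetric statement holds for the backward search with the roles of $f_1,f_2$ and of $s_{start},s_{goal}$ exchanged.

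Soundness of the returned set $Sol+Sol'$ is then immediate: both $Sol$ and $Sol'$ contain only Pareto-optimal paths, and Pareto-optimal paths are pairwise non-dominated, so no returned path dominates another. It remains to prove completeness and cost-uniqueness jointly, which is the core of the argument. Because any two Pareto-optimal solutions have distinct $f_1$-values, it is enough to track, for each Pareto-optimal cost $(c_1,c_2)$, the unique front element with primary cost $c_1$. I would use the invariants that $g_1^{min}(s_{start})$ is lowered only by the backward search and only to the primary cost of a committed backward solution (with those solutions committed in strictly decreasing order of primary cost), and the mirror statements for $g_2^{min}(s_{goal})$ and the forward search. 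From these I would argue: (a) the forward search commits cost $(c_1,c_2)$ precisely when $c_1 < g_1^{min}(s_{start})$ at the moment the relevant node is processed, and symmetrically for the backward search; (b) the two events are mutually exclusive, since once the forward search has committed $(c_1,c_2)$ the bound $g_2^{min}(s_{goal})$ has been tightened to at most $c_2$, so the strict test in the backward search fails, and vice versa; and (c) at least one of the two events occurs, since if the forward search skips the element then $g_1^{min}(s_{start}) \leq c_1$, which together with the strict monotone order of backward commitments and (b) forces the backward search to have committed it. Items (a)--(c) show that each Pareto-optimal cost appears in exactly one of $Sol,Sol'$, giving both completeness and cost-uniqueness of $Sol+Sol'$.

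The step I expect to cost the most effort is making (b) and (c) rigorous under genuine parallel execution: the argument tacitly assumes that a write to $g_1^{min}(s_{start})$ or $g_2^{min}(s_{goal})$ by one search is visible to the other before it commits a node of the same cost. I would therefore either assume sequentially consistent (atomic) access to the two shared bounds, or add the requirement that a solution is appended to its solution set only after the corresponding bound update has been published; under this assumption the hand-off closes and the theorem follows from the two bound-restricted BOA* searches of the previous step.
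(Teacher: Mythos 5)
Your proposal is correct and takes essentially the same route as the paper: reduce the theorem to the fact that each enhanced uni-directional BOA* is independently capable of finding every Pareto-optimal solution (backed by the quoted BOA* correctness theorem and Lemmas~\ref{lemma:3}--\ref{lemma:4}), and then justify the early stopping criteria by showing that the shared upper bounds let the two opposite-order searches jointly cover --- and in your version, exactly partition --- the Pareto front. Your explicit items (a)--(c) and the caveat about when a bound update by one search becomes visible to the other are in fact more rigorous than the paper's own proof, which simply asserts that the remainder of the primary objective's domain ``has already been investigated by the concurrent search.''
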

\begin{proof}
BOBA* executes two enhanced BOA* searches concurrently, each capable of finding all of the solutions.
Therefore, we just need to show the correctness of the stopping criteria.
Each (enhanced) BOA* searches the primary objective's domain in the increasing order of $\mathit{f}$-values and continually shrinks the secondary objective's domain every time a valid solution is found.
Furthermore, since BOBA* shares the upper bounds between its searches, each search can terminate with the first node violating the main objective's upper bound (and consequently other unexplored nodes with larger $\mathit{f}$-values in \textit{Open}) knowing that the rest of the objective's domain has already been investigated by the concurrent search (see Figure~\ref{fig:pareto_front_bounded}).
Therefore, the aggregation of the solutions found in each search yields a complete set of cost-unique non-dominated solutions.
\end{proof}
\section{Practical Considerations}
As BOA* enumerates all non-dominated paths, the size of \textit{Open} can grow exponentially over the course of search. 
Furthermore, the huge number of nodes in difficult cases may result in major memory issues.
For instance, for one particular case in our experiments BOA* generates two billion nodes.
We now present two techniques to handle search nodes more efficiently.\par
\textbf{More efficient \textit{Open} list:}
To achieve faster operations in our \textit{Open} lists, 
since the lower and upper bounds on the $\mathit{f}$-values of the nodes in BOBA* are known prior to its main searches,
we replace the conventional heap-based lists with fixed-size bucket lists without tie-breaking \cite{DBLP:journals/ior/DenardoF79}.
In contrast to other problems where the bucket list is regularly resized and the list is sparsely populated, for the significant number of (cost-bounded) nodes in our problem we expect to see almost all of the buckets filled.
Note that the search may also expand dominated nodes if they are not extracted in a lexicographical order (i.e., nodes are sorted based on their primary cost only), but BOBA* can still obtain cost-unique solutions via the dominance checks incorporated in lines \ref{alg:boa_enh:tie0}-\ref{alg:boa_enh:tie1} and \ref{alg:boa_enh:tie2}-\ref{alg:boa_enh:tie3} of Algorithm~\ref{alg:boa_enh} as formally stated in the following Lemma~\ref{cor:1}.
\begin{lemma} \label{cor:1}
BOBA* is able to obtain cost-unique solutions even without tie-breaking in its \textit{Open} lists.
\end{lemma}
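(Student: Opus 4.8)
The plan is to show that the correctness of BOBA* does not actually rely on the lexicographic ordering of nodes in \textit{Open}, only on the fact that nodes are extracted in non-decreasing order of their \emph{primary} cost $f_1$ (respectively $f_2$ in the backward search). First I would recall what tie-breaking buys us in standard BOA*: when two nodes $x,y$ share a state and $f_1(x) = f_1(y)$, lexicographic extraction guarantees the one with smaller $g_2$ comes out first and the dimension-reduction trick records it in $g_2^{min}(s(x))$, so the other is correctly pruned as dominated. Without tie-breaking, the node with the larger $g_2$ may be extracted first, so a genuinely dominated node can survive the check on line~\ref{alg:boa_enh:prune0} and be expanded. I would argue this costs extra work but never corrupts the output: expanding a dominated node only generates successor nodes that are themselves dominated (by the successors of the weakly-dominating node already recorded via $g_2^{min}$), which are in turn pruned on line~\ref{alg:boa_enh:prune1}, or, in the worst case, propagate further but can never reach $s_{goal}$ with a non-dominated cost vector because $g_2$ along any such chain is bounded below by the $g_2$ of the dominating chain.

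The key remaining worry is the \textit{Sol} set: a dominated node reaching $s_{goal}$ (or coupled to a complementary path via the early-solution rule) could be inserted into \textit{Sol}. Here I would invoke Lemma~\ref{lemma:3_1} directly. Even without tie-breaking, nodes are still extracted with non-decreasing $f_1$, so for the last solution node $z$ and a new solution node $x$ we still have $f_1(z) \le f_1(x)$; the dominance checks on lines \ref{alg:boa_enh:tie0}--\ref{alg:boa_enh:tie1} and \ref{alg:boa_enh:tie2}--\ref{alg:boa_enh:tie3} remove $z$ from \textit{Sol} exactly when $f_1(z) = f_1(x)$, i.e. when $x$ weakly dominates $z$ on the primary cost and strictly improves the secondary cost. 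I would check that across a run of solution nodes with equal $f_1$-value, only the last one (which has the smallest $g_2^{min}(s_{goal})$ at the time of its insertion) survives in \textit{Sol}, and that no retained solution is dominated by any later-found one, so \textit{Sol} stays a set of cost-unique non-dominated solutions. The extraction order also still guarantees the termination test on line~\ref{alg:boa_enh:termination} fires only after all nodes with smaller primary cost have been processed, so no solution is missed.

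Putting the pieces together, I would then conclude as in the proof of the preceding theorem: each search direction, even with an unsorted-within-buckets \textit{Open}, still enumerates every non-dominated solution cost vector, still prunes to keep \textit{Sol} cost-unique via Lemma~\ref{lemma:3_1}, and still terminates correctly by the shared upper bounds; hence $\textit{Sol} + \textit{Sol}'$ is a complete set of cost-unique non-dominated solutions. The main obstacle I anticipate is making the ``expanding dominated nodes is harmless'' claim airtight — specifically, a clean inductive argument that any node descended from a dominated node is itself weakly dominated by some node the algorithm has already accounted for through $g_2^{min}$, so it is either pruned or, if it slips through to a solution, is caught by the Lemma~\ref{lemma:3_1} check rather than polluting \textit{Sol}. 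Everything else is a straightforward adaptation of the arguments already given for BOA* and BOBA*.
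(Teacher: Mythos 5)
Your proposal is correct and its core is exactly the paper's argument: extraction without tie-breaking still proceeds in non-decreasing order of the primary cost, so a dominated solution node with equal $f_1$ may be added to \textit{Sol} first but is then removed by the constant-time check of Lemma~\ref{lemma:3_1} when the dominating node arrives. The additional material you include (harmlessness of expanding dominated intermediate nodes, termination) goes beyond what the paper's proof of this lemma addresses, but it is consistent with the surrounding discussion and does not change the approach.
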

\begin{proof}
Let $z$ and $x$ be two solution nodes where $f_1(z)=f_1(x)$ and $z$ is dominated by $x$.
Without any tie-breaking, the search may temporarily add dominated node $z$ to the solution set first.
In the next iterations, when $x$ is extracted, the search performs a quick dominance check by comparing the $f_1$-value of the new node $x$ against that of the previous solution $z$ and substitutes the dominated solution with the new solution $x$ if $f_1(z)=f_1(x)$, as already shown in the early solution update strategy and Lemma~\ref{lemma:3_1} in detail.
Therefore, BOBA* computes cost-unique non-dominated solutions even without tie-breaking.
\end{proof}
%
\textbf{Memory efficient backtracking:}
Creating nodes is necessary to appropriately navigate the search to valid solution paths.
Each new node occupies a constant amount of memory and conventionally contains essential information about paths such as costs and also back-pointers for solution path construction.
Considering the difficulty of the problem and the significant number of generated nodes, we suggest a more memory efficient approach for the solution path construction in BOBA*.
Since BOBA* only expands nodes once, we propose to recycle the memory used to store heavy processed nodes, while storing their backtracking information in other compact data structures. 
This technique results in a major reduction in memory use as part of the nodes' information would no longer be required for backtracking. 
We explain our compact approach using an example from Figure~\ref{fig:example}. 
Assume that in the second iteration of the algorithm, we want to store the backtracking information of the node corresponding to $s_2$ with $s_s$ as the parent state.
To this end, we keep two (initially empty) dynamic arrays for each state: one to store the parent state of the node \texttt{par\_state}, and another to look up the corresponding path index in the parent state \texttt{par\_path\_id}.
For our example, since the first path to $s_2$ is derived from the first non-dominated path of $s_s$, we store this sequence in $s_2$ as \texttt{par\_state[1]=$s_s$} and \texttt{par\_path\_id[1]=1}.
Similarly, for the second expansion of $s_2$ with $s_3$ as the parent in the fourth iteration, we update $s_2$ arrays, this time with \texttt{par\_state[2]=$s_3$} and \texttt{par\_path\_id[2]=1}.
Figure~\ref{fig:example} also shows the situation of our parent arrays when the forward search terminates.
As a further optimisation, we can store the index of incoming edges (which are usually very small integers) instead of parent states.
We will investigate the impacts of this compression on memory usage in the following section.
\section{Empirical Study and Analysis}
We compare our BOBA* with recent algorithms designed for the bi-objective search problem.
The selected algorithms are the bi-objective variants of Dijkstra's algorithm (Dij) and bi-directional Dijkstra (Bi-Dij) from \cite{DBLP:journals/eor/Sedeno-NodaC19}, and bi-objective A* (BOA*) from \cite{ulloa2020simple}.
We use all seven benchmark instances of \cite{DBLP:journals/eor/Sedeno-NodaC19} which include 700 random $\mathit{start}$-$\mathit{goal}$ pairs from the large road networks in the 9th DIMACS challenge \cite{dimacs9th}
with (\textit{distance}, \textit{time}) as objectives. 
To further challenge the algorithms, we used the competition's random pair generator to design an additional set of 300 test-cases for the larger networks in the DIMACS instances: E, W and CTR (100 cases each) with up to 15~M nodes and 33~M edges. 
The details of the instances can be found in
\cite{dimacs9th}.
\par
\textbf{Implementation:}
We implemented our BOBA* algorithms based on a parallel framework using two cores in C++ and used the C implementations of the Dij, Bi-Dij and BOA* algorithms kindly provided to us by their authors.
We also fixed the scalability and tie-breaking issues in the standard BOA* algorithm before running the experiments.
All code was compiled with O3 optimisation settings using the GCC7.4 compiler.
Our codes are publicly available\footnote{\url{https://bitbucket.org/s-ahmadi}}.
We ran the 1,000 experiments on an Intel Xeon E5-2660V3 processor running at 2.6~GHz and with 128~GB of RAM, under the SUSE Linux 12.4 environment and with a one-hour timeout.\par
\textbf{BOA* analysis:}
The search in BOA* can be performed in different directions and objective orders, resulting in four variants.
We also consider the virtual best version of the four, called BOA*\textsubscript{best} (essentially assuming an oracle that could select the best variant).
Figure~\ref{fig:BOA_order} is a cactus plot comparing the performance of all BOA* variants including the virtual best, showing how many instances can be solved in a given time (the plot only shows the longest running 300 instances).
Backward BOA* in the $(f_1,f_2)$ order (in green) is the weakest variant, but the other variants perform quite similarly, and it is difficult to declare a clear winner.
The performance of the virtual variant BOA*\textsubscript{best} shows that an ideally-tuned BOA* can be up to two times better than its standard version on average, but is still unable to solve 15 cases to optimality within the time limit (see Table~\ref{table:result_time}).
\begin{figure}[t]
\begin{floatrow}
\ffigbox{%
  \includegraphics[]{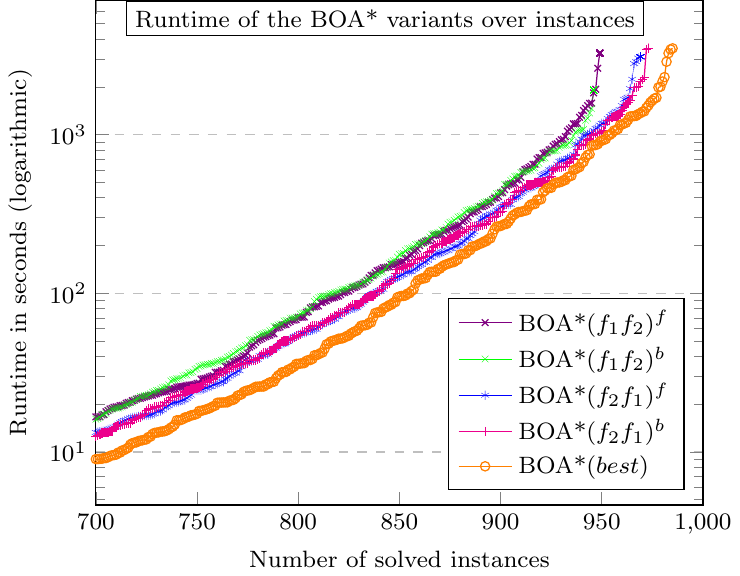}
}{%
  \caption{ Performance of the BOA* variants.}%
  \label{fig:BOA_order}
}
\capbtabbox{%
\footnotesize
\begin{tabular}{ l  l  r r }
\toprule
    
     & Saving & \multicolumn{2}{c}{ Mem. (MB)}\\ \cmidrule{3-4}
   Inst. & Approach &  Avg. & Max\\
    \midrule

NE & Conv. &  307 & 7618\\
&  Compact & \textbf{61} & \textbf{1186} \\
\midrule
CAL & Conv.  &  326 & 5421\\
&  Compact  & \textbf{62} & \textbf{1009}\\
\midrule

LKS & Conv.  &  5585 & 54331\\
&  Compact & \textbf{955} & \textbf{9411}\\
\midrule

E & Conv.  & 5836 & 62963\\
&  Compact  & \textbf{999} & \textbf{10895}\\
\midrule

W & Conv.  & 5877 & 79602\\
&  Compact  & \textbf{925} & \textbf{10498}\\
\midrule

CTR & Conv. & 15835 & 99108\\
&  Compact & \textbf{2662} & \textbf{21749}\\
\bottomrule

\end{tabular}
}{%
  \caption{ BOBA* memory usage for the conventional and compact backtracking approaches.}%
  \label{table:result_memory}
}
\end{floatrow}
\end{figure}
%
%
\par
\textbf{Memory:}
We investigate the impact of our compact approach for the solution paths construction in BOBA*.
Table~\ref{table:result_memory} compares the memory usage of our proposed compact approach against the conventional backtracking approach on part of the benchmark instances.
In order to measure the overall space requirement of the main search, we ignore the memory required for graph construction, shared libraries and heuristics, allocated prior to the search.
The results show that BOBA* can solve all of the instances with both approaches within the time limit, but the compact approach runs slightly faster and is five times more efficient on average in terms of memory.
For the most difficult case in the experiments, the required memory of the compact approach can be as low as 21~GB (allocating 15M nodes with recycling) where the conventional approach needs 96~GB (allocating 1B nodes).
Note that both approaches nearly expand the same number of nodes to solve the cases to optimality.
\par
\textbf{BOBA* performance:}
We compare the performance of our parallel BOBA* algorithm with the state-of-the-art Dij, Bi-Dij and BOA* algorithms from the literature.
Table~\ref{table:result_time} shows the summary of experimental results for the 100 cases of each instance.
For unsolved cases, we generously assume a runtime of one hour (the timeout).
We also report the average memory usage of the main search of each algorithm over solved cases, ignoring the space allocated for their initialisation phase.
The results in Table~\ref{table:result_time} show that the standard BOA* algorithm runs faster, needs less memory compared to both Dij and Bi-Dij algorithms and solves more instances.
However, our new BOBA* outperforms BOA* in all of the instances, showing an (arithmetic) average speed up of 16 over all of the individual cases.
For the average runtime of all instances, BOBA* is around five times faster than BOA*.
We also compare the algorithms' performance over the solved instances for both CPU time and memory usage in Figure~\ref{fig:performance}.
As shown for both metrics, BOBA* delivers superior performance to its competitors by solving all of the instances to optimality within the time limit and with a maximum memory usage of 21~GB, compared to the nearly full (128~GB) memory usage of other algorithms in difficult instances.
BOBA* also shows a massive speed up in the easy cases due to its efficient initialisation phase.
It can solve 282 cases before BOA* solves its easiest case.
Moreover, the figure shows that
BOBA* completes the task eight times more efficiently in terms of memory than BOA* on average.
Note that because of the difficulties in reporting the memory usage, we allow 1~MB tolerance in our experiments.
\begin{table*}[t]
\centering
\footnotesize
\setlength{\tabcolsep}{3.pt}
\renewcommand{\arraystretch}{1}
\begin{tabular}{l | l r *{3}{r} r | l  r *{3}{r}  r}
\toprule

    \multicolumn{3}{c}{}  & \multicolumn{3}{c}{ Runtime (s)} & \multicolumn{1}{c}{Mem.} &
    \multicolumn{2}{c}{}   & \multicolumn{3}{c}{ Runtime (s)} & Mem.
    \\ \cmidrule{4-6} \cmidrule{10-12}
    Alg. & Ins. & $|S|$ & Min & Avg. & Max & Avg. &
    Ins. & $|S|$ & Min & Avg. & Max & Avg.\\
    \midrule
BiDij & NY & 100 & 0.53 & 0.92 & 6.66 & 21 & CAL  & 98 & 4.04 & 168.58 & 3600.00 & 1206 \\ 
Dij & & 100 & 0.31 & 1.47 & 16.98 & 75 &  & 100 & 2.73 & 88.93 & 1105.57 & 4283 \\
BOA* & & 100 & 0.11 & 0.22 & 1.70 & 9 &  & 100 & 0.89 & 24.50 & 538.40 & 893 \\
BOA*\textsubscript{best} &  & 100 & 0.11 & 0.16 & 0.65 & 6 &  & 100 & 0.89 & 8.63 & 187.04 & 498 \\ 
BOA*\textsubscript{enh} & &100 & 0.01 & 0.12 & 0.67 & 2 &  & 100 & 0.02 & 6.69 & 147.44 & 67 \\
BOBA*\textsubscript{1c} & &100 & 0.01 & 0.11 & 0.59 & 7 &  & 100 & 0.01 & 6.08 & 116.10 & 97 \\
BOBA* & & 100 & \textbf{0.00} & \textbf{0.08} & \textbf{0.40} & 2  &  & 100 & \textbf{0.00} & \textbf{3.75} & \textbf{64.80} & 62  \\

\midrule

BiDij & BAY & 100 & 0.61 & 1.32 & 11.96 & 35 & LKS  & 69 & 6.12 & 1610.14 & 3600.00 & 2597 \\ 
Dij & & 100 & 0.36 & 2.01 & 19.71 & 107 &   & 81 & 4.13 & 936.23 & 3600.00 & 11394 \\
BOA* & & 100 & 0.13 & 0.38 & 4.10 & 19 &   & 89 & 1.30 & 528.12 & 3600.00 & 4854 \\
BOA*\textsubscript{best} & & 100 & 0.13 & 0.23 & 1.26 & 13 &  & 100 & 1.28 & 224.42 & 3500.80 & 9374 \\ 
BOA*\textsubscript{enh} & &100 & 0.01 & 0.19 & 1.20 & 3 &  & 100 & 0.02 & 97.05 & 1077.96 & 787 \\
BOBA*\textsubscript{1c} & &100 & 0.01 & 0.19 & 1.08 & 10 &  & 100 & 0.02 & 129.64 & 1488.41 & 1123 \\
BOBA* & & 100 & \textbf{0.00} & \textbf{0.13} & \textbf{0.86} & 2 &   & 100 & \textbf{0.00} & \textbf{69.68} & \textbf{812.17} & 955  \\
\midrule

BiDij & COL & 100 & 0.84 & 6.84 & 147.55 & 118 & E  & 64 & 8.08 & 1611.10 & 3600.00 & 2223 \\ 
Dij & & 100 & 0.52 & 6.27 & 111.81 & 348 &   & 82 & 5.48 & 1034.61 & 3600.00 & 16156 \\
BOA* & & 100 & 0.19 & 1.20 & 20.53 & 77 &   & 89 & 1.72 & 552.64 & 3600.00 & 5299 \\
BOA*\textsubscript{best} &  &100 & 0.18 & 0.58 & 7.03 & 49 &  & 98 & 1.72 & 293.27 & 3600.00 & 8701 \\ 
BOA*\textsubscript{enh} & &100 & 0.01 & 0.54 & 10.46 & 7 &  & 100 & 0.02 & 110.69 & 1684.94 & 850 \\
BOBA*\textsubscript{1c} & &100 & 0.02 & 0.42 & 5.50 & 21 &  & 100 & 0.02 & 143.08 & 1818.63 & 1160 \\
BOBA* & & 100 & \textbf{0.00} & \textbf{0.34} & \textbf{6.58} & 6 &   & 100 & \textbf{0.00} & \textbf{75.94} & \textbf{952.32} & 999  \\
\midrule

BiDij & FLA & 100 & 2.11 & 51.49 & 1088.49 & 808 & W &  69 & 14.38 & 1585.11 & 3600.00 & 3476 \\ 
Dij & & 100 & 1.37 & 52.34 & 1048.67 & 2630 &   & 74 & 10.04 & 1220.44 & 3600.00 & 12722 \\
BOA* & & 100 & 0.48 & 6.42 & 153.07 & 276 &   & 94 & 3.14 & 416.94 & 3600.00 & 7705 \\
BOA*\textsubscript{best} &  & 100 & 0.48 & 3.22 & 36.32 & 202 &  & 98 & 3.14 & 253.85 & 3600.00 & 8043 \\ 
BOA*\textsubscript{enh} & &100 & 0.01 & 2.05 & 34.47 & 22 &  & 100 & 0.04 & 93.16 & 1792.57 & 784 \\
BOBA*\textsubscript{1c} & & 100 & 0.01 & 2.06 & 27.98 & 43 &  & 100 & 0.04 & 130.81 & 1834.67 & 1134 \\
BOBA* & & 100 & \textbf{0.00} & \textbf{1.31} & \textbf{19.86} & 25  &   & 100 & \textbf{0.02} & \textbf{70.41} & \textbf{971.67} & 925  \\
\midrule

BiDij & NE & 99 & 3.31 & 181.67 & 3600.00 & 1367 & CTR  & 48 & 40.41 & 2666.66 & 3600.00 & 4904 \\ 
 Dij & & 100 & 2.18 & 68.41 & 1306.04 & 3281 &  & 51 & 29.29 & 2163.50 & 3600.00 & 16149 \\
 BOA* & & 100 & 0.73 & 16.83 & 332.36 & 587 &  & 77 & 8.46 & 1124.03 & 3600.00 & 9828 \\
BOA*\textsubscript{best} &  & 100 & 0.70 & 10.51 & 332.01 & 533 &  & 89 & 8.46 & 745.16 & 3600.00 & 12418 \\ 
 BOA*\textsubscript{enh} & &100 & 0.02 & 4.79 & 97.25 & 49 &  & 100 & 0.03 & 340.50 & 2953.12 & 2178 \\
 BOBA*\textsubscript{1c} & &100 & 0.02 & 5.71 & 154.51 & 82 &  & 98 & 0.03 & 461.07 & 3600.00 & 2644 \\
 BOBA* & & 100 & \textbf{0.00} & \textbf{3.41} & \textbf{90.01} & 61  &  & 100 & \textbf{0.02} & \textbf{246.01} & \textbf{2496.95} & 2662  \\
\bottomrule
\end{tabular}
\caption{\small 
Number of solved cases ($|S|$), runtime (in seconds) and average memory usage (Mem.) of algorithms over instances (Ins.).
Memory in MB for the main search over solved cases.
}
\label{table:result_time}
\end{table*}
\begin{figure}[t]
\begin{subfigure}{0.49\textwidth}
\includegraphics[]{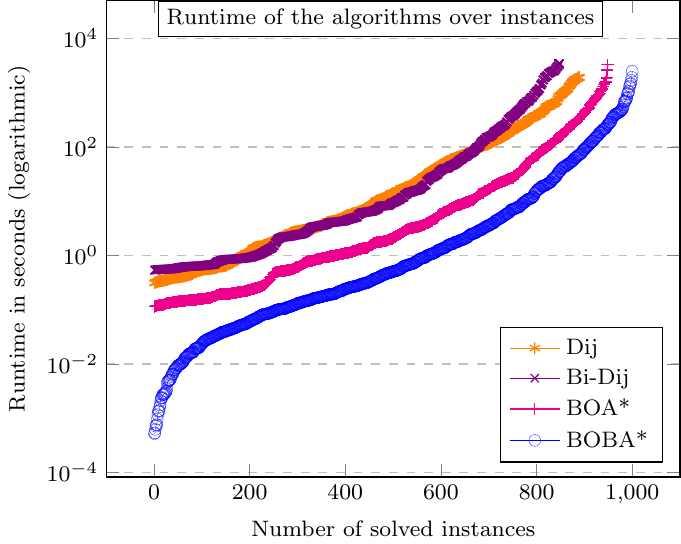}
\end{subfigure}
\begin{subfigure}{0.49\textwidth}
\includegraphics[]{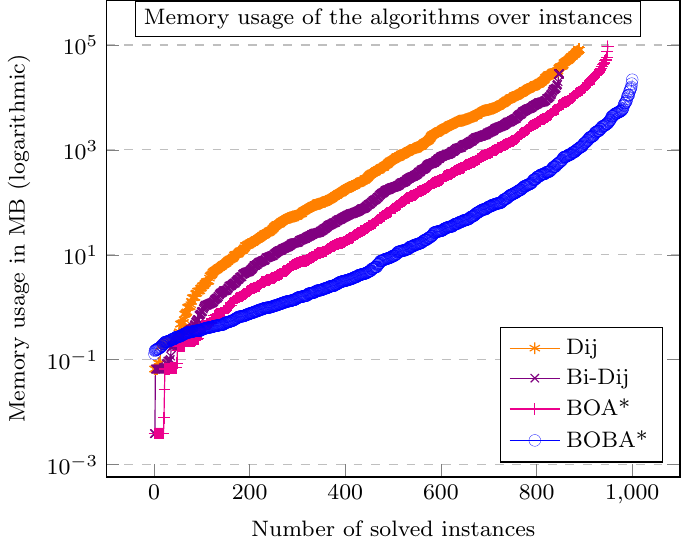}
\end{subfigure}
\vspace{-0.5 \baselineskip}
\caption{ Cactus plots of algorithms' performance. Left: Runtime. Right: Search memory usage.}
\label{fig:performance}
\end{figure}
\par
\textbf{Multi-threading:}
We investigate the impact of multi-threading in BOBA* by running the (unmodified) algorithm on a single core instead of two cores, allowing decisions on scheduling of the threads to be made by the operating system.
We compare single-core BOBA*\textsubscript{1c} with the virtual best variant BOA*\textsubscript{best} and our BOBA* with two cores in Table~\ref{table:result_time}.
The results show a slowdown of around 1.8 compared to parallel BOBA*, but it still outperforms BOA*\textsubscript{best}, solving more instances and showing an (arithmetic) average speed-up of six over all of the individual cases.
Note that this virtual best version BOA*\textsubscript{best} does not exist, and the results are based on the best timings obtained via four individual runs of the standard BOA* algorithm.\par
\textbf{Enhanced BOA*:}
To measure the contributions of our improvements to the uni-directional bi-objective search, we analyse the performance of the enhanced variant BOA*\textsubscript{enh} with the speed-up techniques above.
This variant is obtained by switching off the backward search of BOBA*.
Based on the results given in Table~\ref{table:result_time},
BOA*\textsubscript{enh} outperforms BOA*\textsubscript{best} in almost all of the cases and shows a comparable performance to BOBA*\textsubscript{1c}, solving a few more cases in the CTR map and using less memory on average.
Comparing the maximum runtime over instances, we can see that BOBA*\textsubscript{1c} is faster than BOA*\textsubscript{enh} in half of the instances (maps NY, BAY, COL, FLA and CAL).
Nonetheless, given the results in Table~\ref{table:result_time}, BOBA* is still superior to BOA*\textsubscript{enh} showing a speed-up factor of 1.5 on average.
\par
\textbf{Bucket vs. heap:}
We found BOBA* with the bucket-based \textit{Open} list around 1.8 times faster than BOBA* with heap for the same set of instances on average. Nonetheless, BOBA* with heap is still 2.2 times faster than standard heap-based BOA* (average over instances).
\section{Conclusion}
This paper introduced BOBA*, a bi-directional version of the state-of-the-art BOA* algorithm for bi-objective search.
Our new algorithm explores the graph from both (forward and backward) directions in different objective orders in parallel.
We enrich BOBA* with more efficient approaches for both the initial heuristic procedure and the solution path construction.
We also present several speed up strategies to enhance BOBA's searches in various scenarios.
Our experiments show that BOBA* outperforms the state-of-the-art algorithms in both runtime and memory use, solving all of the 1,000 benchmark cases to optimality in one hour timeout.
Furthermore, compared to BOA*, BOBA* is five times faster and needs eight times less memory on average.
Additional experiments reveal that the single-core version of BOBA* is around 1.8 times slower than the parallel version but still superior to the virtual best variant of BOA* and shows a comparable performance to BOA* enhanced with the speed-up strategies of this study.
\bibliography{References.bib}
\appendix
\clearpage
\section{Backward search of BOBA*}\label{app:boba_back}
\begin{algorithm}[ht]
\footnotesize
\caption{Enhanced backward Bi-Objective A* (BOA*\textsubscript{enh}) in ($f_2,f_1$) objective ordering}
\label{alg:boa_enh_rev}
\DontPrintSemicolon
\SetAlgoLined
 \KwInput{A problem instance (Rev(G), {\bf cost}, $s_{goal}$, $s_{start}$) and heuristics (${\bf h'}$, ${\bf ub'}$, {\bf h})}
 \KwOutput{A set of cost-unique Pareto-optimal solutions}
 $Open' \leftarrow \emptyset, Sol' \leftarrow \emptyset$\ \;
 $g_2^{min}(s) \leftarrow g_1^{min}(s) \leftarrow \infty$ \ for each $s \in S$\ \;
 $x \leftarrow $ new node with $s(x) = s_{goal}$\ \;
 $ {\bf g}(x) \leftarrow (0,0) $, $ {\bf f}(x) \leftarrow (h'_1(s_{goal}),h'_2(s_{goal})) $, $parent(x) \leftarrow Null$ \;
Add $x$ to $Open'$\;
\While{$Open' \neq \emptyset$}
{
 Remove a node $x$ with the lexicographically smallest ($f_2,f_1$) values from $Open'$ \;
     \textcolor{red}{ 
       \label{alg:boa_enh_rev:termination}\nlast
        \lIf{$f_2(x) \geq g_2^{min}(s_{goal})$} 
      {  \textbf{break} }
      } 
     
      \lIf{$g_1(x) \geq g_1^{min}(s(x)) $ \textbf{or} $f_1(x) \geq g_1^{min}(s_{start}) $ \label{alg:boa_enh_rev:prune0}}  
     {\textbf{continue}} 
     
     \textcolor{red}{
     \label{alg:boa_enh_rev:tuning}\nlast
     \lIf{$g_1^{min}(s(x)) = \infty$}
    { $h_2(s(x)) \leftarrow g_2(x)$ } 
   }
    $g_1^{min}(s(x)) \leftarrow g_1(x)$ \label{alg:boa_enh_rev:min_r} \;
    \If{$s(x) = s_{goal}$} 
     { 
     \textcolor{red}{
        \label{alg:boa_enh:tie0}\nlast   $z \leftarrow $ last node in $Sol'$ \; 
        \nlast   \lIf{$(z \neq Null $ \textbf{and} $f_2(z) = f_2(x))$} { Remove $z$ from $Sol'$ } \label{alg:boa_enh:tie1}
        }
     Add $x$ to $Sol'$ \;
        \textbf{continue}
    }
     \textcolor{red}{
     \label{alg:boa_enh_rev:early_sol}\nlast
     \If{$g_1(x) + ub_1'(s(x)) < g_1^{min}(s_{start})$}
    {     \nlast  $g_1^{min}(s_{start}) \leftarrow g_1(x) + ub_1'(s(x))$ \;
         \textcolor{red}{
        \label{alg:boa_enh:tie0}\nlast   $z \leftarrow $ last node in $Sol'$ \; 
        \nlast   \lIf{$(z \neq Null $ \textbf{and} $f_2(z) = f_2(x))$} { Remove $z$ from $Sol'$ } \label{alg:boa_enh:tie1}
        }
         \nlast   Add $x$ to $Sol'$ \;
         \nlast\lIf{$h_2'(s(x))= ub_2'(s(x))$}
       { \textbf{continue} } \label{alg:boa_enh_rev:unique_path}
    }
    }
    
         \For{$all\ t \in Succ(s(x))$}
        { $y \leftarrow $ new node with $s(y) = t$ \;
             ${\bf g}(y) \leftarrow {\bf g}(x) + {\bf cost} (s(x),t)$ \;
             ${\bf f}(y) \leftarrow {\bf g}(y) + {\bf h'} (t)$ \;
             $parent(y) \leftarrow x$ \;
            \lIf{$g_1(y) \geq g_1^{min}(t) $ \textbf{or} $f_1(y) \geq g_1^{min}(s_{start}) $ \label{alg:boa_enh_rev:prune1}} 
            { \textbf{continue} }
            
            \textcolor{red}{\label{alg:boa_enh_rev:prune2}\nlast
            \lIf{$f_2 \geq g_2^{min}(s_{goal}) $ }
            { \textbf{continue} } 
            }
             Add $y$ to $Open'$\;
        }
}
\Return{$Sol'$}\; 
\end{algorithm}

\end{document}